\newcommand*{\thead}[1]{%
\multicolumn{1}{l}{\bfseries\begin{tabular}{@{}c@{}}#1\end{tabular}}}
\newtheorem{theorem}{Theorem}
\newtheorem{lemma}[theorem]{Lemma}
\newlist{Properties}{enumerate}{2}
\setlist[Properties]{label=Property \arabic*.,itemindent=*}
\renewcommand{\vec}[1]{\boldsymbol{#1}}
\newcommand{\mat}[1]{\text{\bf #1}}
\DeclareMathOperator*{\argmax}{arg\,max}
\newcommand*\colourcheck[1]{%
  \expandafter\newcommand\csname #1check\endcsname{\textcolor{#1}{\ding{52}}}%
}
\newcommand{\bracenom}{\genfrac{\lbrace}{\rbrace}{0pt}{}}
\icmltitlerunning{Semantically Conditioned Negative Sampling for Efficient Contrastive Learning}
\begin{document}

\twocolumn[
\icmltitle{Semantically-Conditioned Negative Samples for Efficient Contrastive Learning}



\icmlsetsymbol{equal}{*}

\begin{icmlauthorlist}
\icmlauthor{James O' Neill}{uol}
\icmlauthor{Danushka Bollegala}{uol}

\end{icmlauthorlist}

\icmlaffiliation{uol}{Department of Computer Science, University of Liverpool, Liverpool, England}

\icmlcorrespondingauthor{James O' Neill}{james.o-neill@liverpool.ac.uk}
\icmlcorrespondingauthor{Danushka Bollegala}{danushka.bollegala@liverpool.ac.uk}

\icmlkeywords{Machine Learning, ICML}

\vskip 0.3in
]



\printAffiliationsAndNotice{} 

\begin{abstract}
Negative sampling is a limiting factor w.r.t. the generalization of metric-learned neural networks. We show that uniform negative sampling provides little information about the class boundaries and thus propose three novel techniques for efficient negative sampling: drawing negative samples from (1) the top-$k$ most semantically similar classes, (2) the top-$k$ most semantically similar samples and (3) interpolating between contrastive latent representations to create pseudo negatives. 
Our experiments on CIFAR-10, CIFAR-100 and Tiny-ImageNet-200 show that our proposed \textit{Semantically Conditioned Negative Sampling}
and Latent Mixup lead to consistent performance improvements. In the standard supervised learning setting, on average we increase test accuracy by 1.52\% percentage points on CIFAR-10 across various network architectures. In the knowledge distillation setting, (1) the performance of student networks increase by 4.56\% percentage points on Tiny-ImageNet-200 and 3.29\% on CIFAR-100 over student networks trained with no teacher and (2) 1.23\% and 1.72\% respectively over a \textit{hard-to-beat} baseline~\cite{hinton2015distilling}.




\end{abstract}

\section{Introduction}
Training deep neural networks using contrastive learning has shown state of the art (SoTA) performance in domains such as computer vision~\cite{oord2018representation,he2020momentum,chen2020improved,henaff2020data}, speech recognition~\cite{oord2018representation} and natural language processing~\cite{mueller2016siamese,logeswaran2018efficient,fang2020cert}. The generalization performance in contrastive learning heavily relies on the quality of negative samples used during training to define the classification boundary and requires a large number of negative samples. Hence, contrastive learning relies on techniques that enable training of large batch sizes, such as learning a lookup table~\cite{xiao2017joint,wu2018unsupervised,he2020momentum} to store lower-dimensional latent features of negative samples. However, training large models using contrastive learning can be inefficient when using uniformly sampled negatives (USNs) as the total number of potential negative sample pairs is $\mathcal{O}((N - N_+) N)$ where $N$ is the number of training samples and $N_+$ is the number of positive class samples. Hence, even a lookup table may poorly estimate negative sample latent features, even for a large number of USNs and training epochs.

A complementary approach to improve the learning efficiency of a contrative learned neural network is to reduce the model size using model compression techniques such as knowledge distillation~\citep[KD;][]{bucilua2006model}. In neural networks, this is achieved by transferring the logits of a larger ``teacher'' network to learn a smaller ``student'' network~\cite{hinton2015distilling}. There has been various KD methods proposed~\cite{romero2014fitnets,hinton2015distilling,zagoruyko2016paying,yim2017gift,passalis2018learning,tung2019similarity,peng2019correlation}. They involve minimizing KL divergence (KLD) between the student network and teacher network logits~\cite{hinton2015distilling}, minimizing the squared error between the student and teacher network intermediate layers~\cite{romero2014fitnets}, metric learning approaches ~\cite{tung2019similarity,ahn2019variational,tian2019contrastive,park2019relational}, attention transfer of convolution maps~\cite{zagoruyko2016paying} and activation boundary transfer~\cite{heo2019knowledge}.

In this paper, we propose three efficient \emph{negative sampling} (NS) alternatives to uniform NS and show their efficacy in standard supervised learning and the aforementioned KD setting. Our NS techniques are also complementary to the aforementioned lookup tables used for retrieving negative latent features. All three techniques have a common factor in that they produce negatives that are semantically similar to the positive targets on both instance- and class-levels. 
We collectively refer to these sampling methods as \emph{Semantically Conditioned Negative Sampling} (SCNS) as we replace a uniform prior over negative samples with a conditional probability distribution defined by the embeddings produced by a pretrained network. SCNS provides more informative negatives in contrast to USNs where many samples are \textit{easy to classify} and do not support the class boundaries. Additionally, it requires no additional parameters at training time and thus can be used with large training sets and models. The pretrained representations are used to estimate pairwise class-level and instance-level semantic similarities to define a top-$k$ NS probability distribution.
This reduces the number of negative pairs from $\mathcal{O}(N(N-N_+))$ to $\mathcal{O}(Nk)$ where $k$ is the number of nearest neighbor negative samples for each sample.
Below is a summary of our contributions.


\textbf{1) Class and Instance Conditioned Negative Sampling:} 
We define the NS distribution of each class by drawing negative samples proportional to the top-$k$ cosine similarity between pretrained word embeddings of the class labels. We also propose top-$k$ instance-level similarity for defining the NS distribution by performing a forward pass with a pretrained network prior to training.

\textbf{2) Contrastive Representation Mixup:} 
In \autoref{sec:latent_mixup}, we propose \emph{Latent Mixup} (LM), a variant of Mixup~\cite{zhang2017mixup} that operates on latent representations between teacher positive and negative representations to produce harder pseudo negative sample representations that lie closer to the class boundaries. This is also carried out for the student network representations and a distance (or divergence) is minimized between both mixed representations. 

\textbf{3) Theoretical Analysis of Conditioned Sampling:} We reformulate the mutual information lower bound to account for semantic similarity of contrastive pairs and describe the sample efficiency of SCNS compared to uniform sampling.  

\section{Related Research}\label{sec:rr}
Before describing our proposed methods, we review related work on the two most related aspects: efficient NS and KD. 

\textbf{Efficient Negative Sampling}
Efficient NS has been explored in the literature, predominantly for triplet learning. \textit{Semi-hard} NS has been used to sample negative pairs yet are still further in Euclidean distance than the anchor-positive pair~\cite{schroff2015facenet}.~\citet{oh2016deep} combine contrastive and triplet losses to mine negative structured embedding samples, drawing negative samples proportional to a Gaussian distribution of negative sample distances to the anchor sample~\citet{harwood2017smart}.~\citet{suh2019stochastic}
select hard negatives from the class-to-sample distances and then search on the instance-level within the selected class to retrieve negative samples.
~\citet{wu2017sampling} proposed a distance weighted sampling that selects more stable and informative samples when compared to uniform sampling and show that data selection is at least as important as the choice of loss function. 
~\citet{zhuang2019local} define two neighborhoods using $k$-means clustering, \textit{close} neighbors and dissimilar samples are \textit{background} neighbors.~\citet{wu2020mutual} use ball discrimination to discriminate between hard and easy negative unsupervised representations where positive pairs are different views of the same image. 
~\citet{tran2019improving} use the prior probability of observing a class to draw negative samples and then further sample instances within the chosen class based on the inner product with the anchor of the triplet, showing improvements over \textit{semi-hard} NS without the use of informative priors.

\textbf{Knowledge Distillation}
The original KD objective minimizes the Kullbeck-Leibler divergence between student network and teacher network logits~\cite{hinton2015distilling}.~\citet{romero2014fitnets} instead restrict the student network hidden representation to behave similarly to the teacher network hidden representations by minimizing the squared error between corresponding layers of the two networks.
The main restriction in this method is that both networks have to be the same depth and of similar architectures. 
Attention Transfer (AT)~\cite{zagoruyko2016paying} performs KD by forcing the student network to mimic the attention maps over convolutional layers of the pretrained teacher network. 
~\citet{passalis2018learning} use Gaussian and Cosine-based kernel density estimators (KDEs) to maximize the similarity between the student and teacher probability distributions. 
They find consistent improvements over \citet{hinton2015distilling} and \textit{Hint layers} used in Fitnet~\cite{romero2014fitnets}.
Similarity-Preserving (SP)~\cite{tung2019similarity} KD ensures that the activation patterns of the student network are similar to that in the teacher network for semantically similar input pairs. 
~\citet{peng2019correlation} propose Correlation Congruence (CC) to maximize multiple cross-correlations between samples of the same class. 
~\citet{ahn2019variational} provide an information-theoretic view of KD by maximizing the mutual information between student and teacher networks through variational information maximization~\cite{barber2003algorithm}. Moreover,~\citet{park2019relational} argue that the distance between relation structures created from multiple samples of the student and teacher networks should be minimized. 
They propose Relational KD (RKD), which involves the use of both distance-wise and angle-wise distillation losses that penalize structural discrepancies between multiple instance outputs from both networks. 
Contrastive Representation Distillation~\cite{tian2019contrastive} uses a CL objective to maximize a lower bound on the mutual to capture higher order dependencies between positive and negative samples, adapting their loss from~\citet{hjelm2018learning}.

\section{Methodology}
We begin by defining a dataset as $\mathcal{D} := \{(\vec{x}_i, \vec{y}_i)\}_{i = 1}^{N}$, 
which consists of $N$ samples of an input vector $\vec{x} \in \mathbb{R}^n$ and a corresponding target $\vec{y} \in \{0 ,1\}^C$ where sample $s_i:= (\vec{x}_i, \vec{y}_i)$ and $C$ is the number of classes. In the CL setting $\vec{x} = (\vec{x}_{*}, \vec{x}_{+} ,\vec{x}_{-,1},.., \vec{x}_{-, M})$ where $X_{+} := (\vec{x}_{*}, \vec{x}_{+})$ and $X_{-}:= (\vec{x}_{*}, \vec{x}_{-, 1}\ldots x_{-,M})$ for $M$ negative pairs. We denote a neural network as $f_{\theta}(\vec{x})$ which has parameters $\theta := (\theta_1, \theta_2,\ldots \theta_{\ell} \ldots, \theta_L)^T$ where $\theta_{l}:= \{\mat{W}_{l}, \vec{b}_{l}\}$, $\mat{W}_{l} \in \mathbb{R}^{d_l \times d_{l+1}}$, $\vec{b} \in \mathbb{R}^{d_{l+1}}$ and $d_l$ denotes the dimensionality of the $l$-th layer. 
The input to each subsequent layer is denoted as $\vec{h}_{l} \in \mathbb{R}^{d_{l}}$ where $\vec{x} \coloneqq \vec{h}_0$ and the corresponding output activation is denoted as $\vec{z}_l = g(\vec{h}_{l})$.
For brevity, we refer to $\vec{z} = g(\vec{h}_L)$ as the \emph{unnormalized output} where $g: \mathbb{R}^{d_L} \to \mathbb{R}^p$ and $\vec{z} \in \mathbb{R}^{p}$. However, when using a metric loss, $g: \mathbb{R}^{d_L} \to \mathbb{R}^{d_L}$ and therefore $\vec{z} \in \mathbb{R}^{d_L}$. In the former case, the cross-entropy loss is used for supervised learning and defined as $\ell_{\mathrm{CE}}(\mathcal{D}):= \frac{1}{N}\sum_{i=1}^N \sum_{c=1}^{p} -\vec{y}_{i, c}, \log \vec{\hat{y}}_{i, c}$ where $\vec{\hat{y}}_i = \sigma (f_\theta(\vec{x}_i)/\tau)$, $\vec{\hat{y}}_i \in \mathbb{R}^{p}$ and $\tau \in (0, + \infty)$ is the temperature of the softmax $\sigma$. 

 We also consider the KD setting where a student network $f_{\theta}^{\mathcal{S}}$ learns from a pretrained teacher network $f_{\omega}^{\mathcal{T}}$ with pretrained and frozen parameters $\omega$. 
The last hidden layer representation of $f_{\theta}^{\mathcal{S}}$ is given as $\vec{z}^{\mathcal{S}}:= f_{\theta}^{\mathcal{S}}(\vec{x})$ and similarly $\vec{z}^{\mathcal{T}} := f_{\omega}^{\mathcal{T}}(\vec{x})$. 
The Kullbeck-Leibler Divergence (KLD), $D_{\mathrm{KLD}}$, between $\vec{z}^{\mathcal{S}}$ and $\vec{z}^{\mathcal{T}}$ is defined in \autoref{eq:kld_div}

\begin{equation}\label{eq:kld_div}
\begin{split}
D_{\mathrm{KLD}}(\vec{y}^{\mathcal{T}} ||\vec{y}^{\mathcal{S}}) = \mathbb{H}( \vec{y}^{\mathcal{T}}) - \vec{y}^{\mathcal{T}} \log(\vec{y}^{\mathcal{S}}) \\
\vec{y}^{\mathcal{S}} = \sigma(\vec{z}^{\mathcal{S}}/\tau), \quad \vec{y}^{\mathcal{T}} = \sigma(\vec{z}^{\mathcal{T}}/\tau) 
\end{split}
\end{equation}

where $\mathbb{H}(\vec{y}^{\mathcal{T}})$ is the entropy of the teacher distribution $\vec{y}^{\mathcal{T}}$. 
Following~\citet{hinton2015distilling}, the weighted sum of cross-entropy loss and KLD loss shown in \autoref{eq:kld_loss} is used as our main KD baseline, where $\alpha \in [0, 1]$. 


\begin{equation}\label{eq:kld_loss}
\ell_{\mathrm{KLD}} = (1 - \alpha) \ell_{\mathrm{CE}}(\vec{y}^{\mathcal{S}}, \vec{y}) + \alpha \tau^2 D_{\mathrm{KLD}}\big(\vec{y}^{\mathcal{S}}, \vec{y}^{\mathcal{T}}\big) 
\end{equation}
To carry out KD using the KLD loss, the outputs of the pretrained teacher  $f^{\mathcal{T}}_{\omega}$ are stored after performing a single forward pass over mini-batches $\mathcal{B} \subset \mathcal{D}$ in our training set. These outputs are then retrieved for each mini-batch update of the smaller student network $f^{\mathcal{S}}_{\theta}$. Given this background, the next two subsections will describe our three main approaches to improving NS in contrastive learning.

\subsection{Semantically Conditioned Negative Sampling}\label{sec:scns}
Here, we describe two of our three approaches for improving NS efficiency that both involve using $f^{\mathcal{T}}_{\omega}$ to define a NS distribution. The first involves a cross-modal teacher network (i.e pretrained word embeddings for image classification) to define a \textbf{class-level} NS distribution and in the second $f_{\omega}^{\mathcal{T}}$ is a pretrained image classifier that defines an \textbf{instance-level} NS distribution. 

\subsubsection{Class-Level Negative Sampling}\label{sec:class_ns}
Our first method assumes that word embedding similarity between class labels highly correlates with image embedding similarity~\cite{leong2011measuring,frome2013devise}. Pretrained word embedding similarities are used to improve the sample efficiency of NS in contrastive learning and replace uniform NS that is typically used. The cosine similarity is measured between the pretrained word embeddings $(\vec{z}^{\mathcal{T}}_{w_i}, \vec{z}^{\mathcal{T}}_{w_j})$ where $(w_i, w_j)$ are the class labels in the vocabulary $\mathcal{V}$ and $|\mathcal{V}|=C$. This is carried out for all pairs to construct an all pair cosine similarity matrix $\mat{Z}_{\mathcal{V}} \in \mathbb{R}^{|\mathcal{V}|\times |\mathcal{V}|}$ that is then row-normalized with the softmax function $\sigma$ as $\mat{P}_{\mathcal{V}}:= \sigma(\mat{Z}_{\mathcal{V}}/\tau)$. Here, setting $\tau$ high leads to harder negative samples being chosen from the most similar classes. $\mat{P}$ represents the conditional probability matrix used to define $X_{-}$ by drawing samples as \autoref{eq:drawing_neg_samp} where $\mathcal{D}_w$

\begin{equation}\label{eq:drawing_neg_samp}
    x_{-} \sim \mathcal{D}_w \propto \mat{P}_{w}
\end{equation}

 represents all samples $(x^1_w, \ldots x^M_w)$ for a given class associated with $w$. This is repeated $M$ times when using CL.
\paragraph{Hard $k$-Nearest Class-Level Negative Samples}
Instead of sampling over a possible $M=|\mathcal{V}|-1$ number of negative samples, we can define the top-$k$ most similar \emph{hard} negative samples. The top-$k$ cosine similarities from other labels in $\mathcal{V}$ are selected by applying \autoref{eq:class_cosine} where $z^{\mathcal{T}}_{w_i} \in \mathbb{R}^{d_w}$ and $\vec{z}_{w_i} := f^{\mathcal{T}}_{\theta}(w_i)$ of the class label $w_i$. 

\begin{equation}\label{eq:class_cosine}
\begin{split}
\mathrm{topk}_w(\vec{z}^{\mathcal{T}}_{w_i}) = \argmax_{k \neq i}\big[\cos(\vec{z}^{\mathcal{T}}_{w_i},\vec{z}^{\mathcal{T}}_{w_k })\big]
\end{split}
\end{equation}

The $k$-nearest neighbor ($k$-NN) similarity scores are then stored in $\mat{Z}_{\mathcal{V}}^{k} \in \mathbb{R}^{|\mathcal{V}|\times k}$ with a corresponding matrix  $I_{\mathcal{V}}^{k} \in \mathbb{R}^{|\mathcal{V}|\times k}$ that stores the $k$-NN class indices $\mat{Z}_{\mathcal{V}}^{k}$ retrieved by applying \autoref{eq:class_cosine}. 
We focus on only sampling from the top-$k$ classes and therefore define the normalized top-$k$ class distribution matrix as $\mat{P}^{k}_{\mathcal{V}} := \sigma(\mat{Z}^{k}_{\mathcal{V}}/\tau)$. A row vector of $\mat{P}^{k}_\mathcal{V}$ is denoted as $\vec{P}^{k}_w$ consisting of the $k$ truncated conditional probabilities corresponding to the $k$ nearest class labels of $w \in \mathcal{V}$. We then sample as in \autoref{eq:drawing_neg_samp} instead with top-$k$ negative samples $\mathcal{D}^{k}_w \subset  \mathcal{D}_w$ and $|\mathcal{D}^{k}_w| = k |\mathcal{D}_w|$.

\begin{figure}[ht]
    \centering
  \includegraphics[width=.95\linewidth]{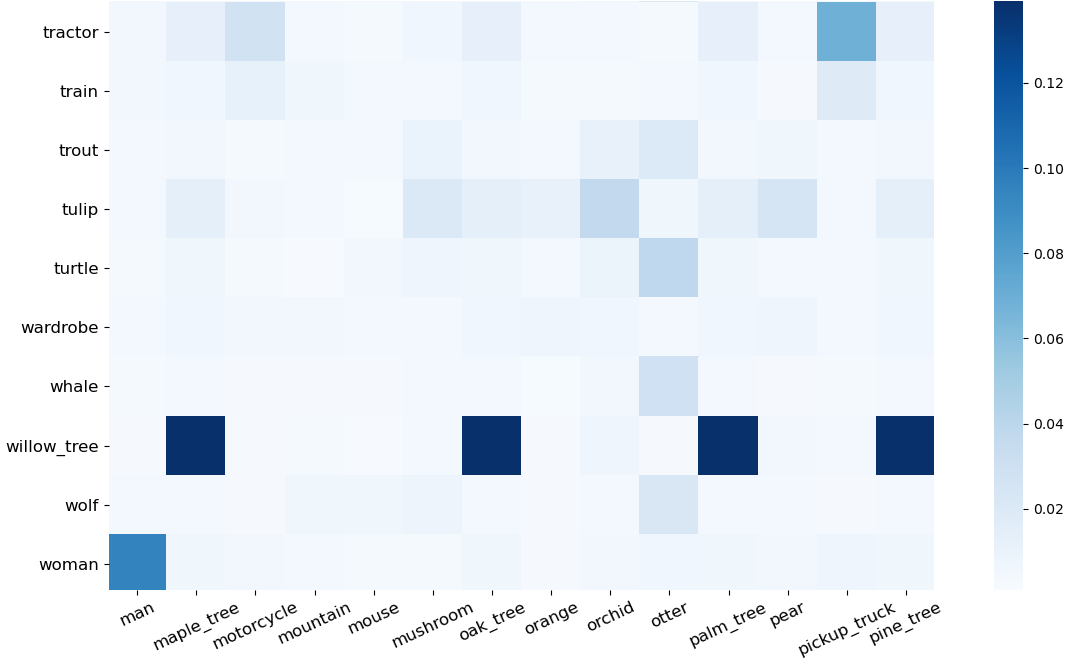}
    \caption{CIFAR-100 subset of word embedding class similarities}
    \label{fig:semantic_classes}
\end{figure}

Figure \eqref{fig:semantic_classes} shows a submatrix of $\mat{P}_{\mathcal{V}}$ as a heatmap corresponding to a subset of  CIFAR-100 class labels on the x and y-axis. We see that ``willow-tree'' has a high similarity score with ``maple-tree", ``oak-tree", ``palm-tree" and ``pine-tree". Therefore, samples from these class labels will be sampled more frequently as hard yet more informative negative samples. Similarly, (``man", ``woman") and (``tractor", ``pickup-truck") would be sampled at a higher rate than the remaining terms.




\subsubsection{Instance-level Conditioned Sampling}

Class-level SCNS (or Class-SCNS) may not be granular enough as the conditional probability assigned to a class is the same for all samples within that class. In instance-level SCNS we define the top-$k$ nearest \emph{samples} for each $\vec{x} \in \mathcal{D}$. A top-$k$ instance similarity matrix produced by $f_{\omega}^{\mathcal{T}}$ is iteratively constructed  $\forall \vec{x} \in \mathcal{D}$
and the outputs are stored in $\mat{Z}^{k}_{\vec{x}} \in \mathbb{R}^{N \times k}$. As before, we define $\mat{P}^{k}_{\vec{x}}$ and sample $\vec{x}_{-} \sim \mathcal{D}^{k}_{x} \propto \mat{P}^{k}_{\vec{x}}$. Unlike Class-SCNS, $f^{\mathcal{T}}_{\omega}$ is trained on the same modality (e.g images) as $f^{\mathcal{S}}_{\theta}$ and $\mat{P}_{x}^{k}$ is now a SCNS matrix for each $\vec{x}_{*} \in \mathcal{D}$ and not per class label $w \in \mathcal{V}$. 

For image classification, we choose $f_{\omega}^{\mathcal{T}}$ to be a pretrained CNN. We use the final hidden representation $\vec{z}_{x_{*}}^{\mathcal{T}} \in \mathbb{R}^{d}$ which is a latent representation of an input image $\mat{x} \in \mathbb{R}^{C_{i} \times d_{w_i} \times d_{h_i}}$ where $C_i$ is the number of input channels, $d_{w_i}$ is the width of the input image and $d_{h_i}$ is the input image height. However, if the $l$-th intermediate layer activation $\mat{h} \in \mathbb{R}^{C_{o}\times d_{h_o}\times d_{h_o}}$ is used to measure semantic similarity, we vectorize $\mat{h}$ to $\vec{h} \in \mathbb{R}^{C_{o}d_{h_o} d_{h_o}}$, where $C_o$ is the number of output channels, $d_{h_o}$ is the height of the output feature maps and $d_{w_o}$ is the output width. We note that Instance-SCNS may be favoured over Class-SCNS particularly when $|\mathcal{V}|$ is relatively low and $k \approx |\mathcal{V}|$ (e.g CIFAR-10). 

\subsubsection{Conditioned Samples with a Lookup Table}
To further reduce training time we can combine SCNS with a lookup table~\cite{xiao2017joint} that stores negative sample features and updates during training. 
%
For $\forall X_{+} \in \mathcal{B}$, the dot product is computed between $\vec{z}^{\mathcal{S}}$ and the $i$-th column of the $L_2$ row normalized lookup table $\mat{V} \in \mathbb{R}^{d_l \times N_v}$. If the $i$-th target $y_i$ is predicted then the update $\vec{v}_i \leftarrow \gamma \vec{v}_i + (1 - \gamma)\vec{z}^{\mathcal{S}}$ is performed in the backward pass. 
A high $\gamma \in [0, 1]$ results in a smaller update $\vec{v}_i \in \mathbb{R}^{d_l}$ in the lookup table. 
For $X_{-}$, a circular queue $\mat{Q} \in \mathbb{R}^{d_l \times N_q}$ is used where $N_q$ is the queue size and the dot product $\mat{Q}^{\top}\vec{z}^{\mathcal{S}}$ is computed. The current features are put into the queue and older features are removed from the queue during training. 
\autoref{eq:queue_v} shows the conditional probability $p_i$ corresponding to the target $\vec{y}_i$ where $\vec{u}_{n} \in \mathbb{R}^{d_l}$ is stored as the $n$-th row of $\mat{Q}$. . 

\begin{equation}\label{eq:queue_v}
p_i = \frac{e^{(\vec{v}^{\top} \vec{z}^{\mathcal{S}}_{*}/\tau )}}{ \sum_{m=1}^{N_v}e^{(\vec{v}^{\top}_m \vec{z}^{\mathcal{S}}_{*}/\tau )} 
 + \sum_{n=1}^{N_q} e^{(\vec{u}^{\top}_n \vec{z}^{\mathcal{S}}_{*}/\tau )}}  
\end{equation}

For the anchor sample $\vec{x}_{*}$, the conditional probability that the representation of a negative sample $\vec{x}_{-}$ matches in the circular queue is given by \autoref{eq:queue_q}.

\begin{equation}\label{eq:queue_q}
q_i = \frac{e^{(\vec{u}^{\top}_i \vec{z}^{\mathcal{S}}_{*}/\tau )}}{\sum_{m=1}^{N_v} e^{(\vec{v}^{\top}_m \vec{z}^{\mathcal{S}}_{*}/\tau )} 
+ \sum_{n=1}^{N_q} e^{(\vec{u}^{\top}_n \vec{z}^{\mathcal{S}}_{*}/\tau )}}
\end{equation}

The gradient $\nabla_{z_{\mathcal{S}}}\mathbb{E}_{\vec{z}^{\mathcal{S}}}[\log p_i]$ is defined in the backward pass as \autoref{eq:bp_queue},  

\begin{equation}\label{eq:bp_queue}
\frac{\partial \ell}{\partial \vec{z}^{\mathcal{S}}} = \frac{1}{\tau} \Big[(1 - p_i)\vec{v} - \sum_{\substack{j=1, \\ j \neq y}}^{N_V} p_j \vec{v}_j - \sum_{k=1}^{N_q} q_k \vec{u}_k \Big]
\end{equation}

where $y$ is the column of $\mat{V}$ corresponding to the $y$-th target $y \in \mathbb{N}_{+}$ and $\vec{y}^{\mathcal{T}}$ in the KD setting. This lookup table can be used complementary to SCNS and we use it in our experiments. It is also easier to compare to prior CL approaches~\cite{tian2019contrastive} as they too use a lookup table.

\begin{algorithm}[t]
\caption{\label{alg:scns} SCNS distillation algorithm.}
\begin{algorithmic}[1]
    \STATE \textbf{input:} \small{mini-batch size $M$, number of batches $N$, student network $f_{\theta}$, teacher network $g_{\varphi}$, regularization terms $\gamma_{+}, \gamma_{-}$.}
    
    \FOR{$k$-NN sampled minibatch $\{\bm B_i\}_{i=1}^N$}
    \STATE $\ell = 0$
    \STATE \textbf{for all} $\{\bm x\}_{j=1}^M$ \textbf{do}

    \STATE $~~~~$ \textcolor{gray}{\# Positive embedding features}

   \STATE $~~~~~$$\bm h^{\mathcal{S}}_{*}, \bm h^{\mathcal{S}}_{+}, \bm h^{\mathcal{S}}_{-}  = f^{\mathcal{S}}(\bm x_{*}), f^{\mathcal{S}}(\bm x_{+}), f^{\mathcal{S}}(\bm x_{-})$ 
   
   \STATE $~~~~~$$\bm h^{\mathcal{T}}_{*}, \bm h^{\mathcal{T}}_{+}, \bm h^{\mathcal{T}}_{-} = f^{\mathcal{S}}(\bm x_{*}), f^{\mathcal{T}}(\bm x_{+}), f^{\mathcal{T}}(\bm x_{-})$ 
  
      \STATE $~~~~$ \textcolor{gray}{\# contrastive features retrieved from lookup table}
    \STATE $~~~~~$$\bm z^{\mathcal{S}}_{*},  z^{\mathcal{S}}_{+},  z^{\mathcal{S}}_{-} = g^{\mathcal{S}}(\bm h_{*}), g^{\mathcal{T}}(\bm h_{+}), g^{\mathcal{T}}(\bm h_{-})$   

    \STATE $~~~~~$$\bm z_{*}^{\mathcal{T}}, z_{+}^{\mathcal{T}}, z_{-}^{\mathcal{T}} = g^{\mathcal{T}}(\bm h_{*}), g^{\mathcal{T}}(\bm h_{+}),  g^{\mathcal{T}}(\bm h_{-})$  
       
   \STATE $~~~$ \textcolor{gray}{\# Contrastive mixup representations}
   \STATE $~~~~~$$\bm \tilde{z}^{\mathcal{S}}, \tilde{z}^{\mathcal{T}} = \kappa(\bm z_{-}^{\mathcal{S}}, \bm z_{*}^{\mathcal{S}}), \kappa(\bm z_{-}^{\mathcal{T}}, \bm z_{*}^{\mathcal{T}}) $

    \STATE $~~~~$ \textcolor{gray}{\# student network prediction}
    \STATE $~~~~~$ $\vec{y}^{\mathcal{S}} =  \sigma(\bm h_{*}^{\mathcal{S}} \mat{W}^T)$
    
    \STATE \textcolor{gray}{~~~~~\# Cross-entropy loss}
    \STATE $~~~~$ $\ell := \ell + (1 - \alpha)\ell_{\mathrm{CE}}(\bm y^S, \bm y)$
    
    \STATE \textcolor{gray}{~~~~\# Latent Mixup Loss}
    \STATE $~~~$ $\ell := \ell + \alpha \ell_{\mathrm{KLD}}(\vec{\tilde{z}}^S, \vec{\tilde{z}}^T)$
    
    \STATE \textcolor{gray}{~~~~\# KD loss of positive and negative samples}
    \STATE $~~~$ $\ell := \ell - \gamma_{+} \ell_{\mathrm{KD}}(\vec{z}^{\mathcal{S}}_{+}, \vec{z}^{\mathcal{T}}_{+})$
    
    \STATE $~~~$ $\ell := \ell - \gamma_{-} \ell_{\mathrm{KD}}(\vec{z}^{\mathcal{S}}_{-},\vec{z}^{\mathcal{T}}_{-})$
    
    \STATE \textbf{end for}

    \STATE perform gradient updates on $f_\theta$ to minimize $\ell$
    \ENDFOR
    \STATE \textbf{return} encoder network $f(\cdot)$, and throw away $g(\cdot)$
\end{algorithmic}
\end{algorithm}

\subsection{Interpolating Contrastive Representations}\label{sec:latent_mixup}
Instead of using a pretrained network to define the negative samples that are close to the classification boundary, we can instead mix positive and negative representation to produce pseudo negative samples that are close to the positive sample. Mixup~\cite{zhang2017mixup} is a simple regularization technique that performs a linear interpolation of inputs.

Our proposed LM instead mixes the latent reprepsentations between positive and negative pairs given by the student and teacher networks as opposed to mixing the raw images. The motivations for this is that 
$f^{\mathcal{S}}_{\theta}$ learns more about the geometry of the embedding space induced by $f^{\mathcal{T}}_{\omega}$ and interpolating on a lower-dimensional manifold than the original input can lead to smoother interpolations. The interpolation function $\kappa(\vec{z}_i, \vec{z}_j)$ in~\autoref{eq:mixup} outputs a contrastive mixture $\tilde{\vec{z}}$ from $\vec{z}_i \in \mathbb{R}^{d}$, $\vec{z}_j \in \mathbb{R}^d$ and the mixture coefficient $\nu \in [0, 1]$ is drawn from the beta distribution $\nu \sim \mathrm{Beta}(\beta, \beta)$ where  $\beta \in [0, \infty]$ and $\beta \to 0$ approaches the empirical risk.
\begin{equation}\label{eq:mixup}
   \kappa(\vec{z}_i, \vec{z}_j) = \nu \vec{z}_i + (1 - \nu) \vec{z}_j
\end{equation}

Both student and teacher LM representations and teacher targets are then computed as,

\begin{gather}\label{eq:latent_mix}
\tilde{\vec{z}}^{\mathcal{S}}_{ij} = \kappa(\vec{z}^{\mathcal{S}}_i, \vec{z}^{\mathcal{S}}_j),  \quad
\tilde{\vec{z}}^{\mathcal{T}}_{ij} = \kappa(\vec{z}^{\mathcal{T}}_i, \vec{z}^{\mathcal{T}}_j) \\
\tilde{\vec{y}}_{ij}^{\mathcal{S}} = \sigma\big(\mat{W}^T \tilde{\vec{z}}^{\mathcal{S}}_{ij}/\tau\big) , \quad \tilde{\vec{y}}_{ij}^{\mathcal{T}} = \sigma\big(\kappa(\vec{y}_{i}^{\mathcal{T}}, \vec{y}_{j}^{\mathcal{T}})/\tau\big)
\end{gather}

where $\tilde{\vec{y}}_{ij}^{\mathcal{T}}$ is a synthetic bimodal mixup target. Henceforth, we will denote mixup teacher targets as $\tilde{\vec{y}}^{\mathcal{T}}$ and LM representations as $\tilde{z}^{\mathcal{S}}$ and $\tilde{z}^{\mathcal{T}}$. The objective can then be described by the KLD as \autoref{eq:kld} where $\mathbb{H}$ is the entropy of the predicted teacher distribution over classes $\tilde{\vec{y}}^{\mathcal{T}}$. When training from scratch with standard cross-entropy, the targets are mixed and renormalized with $\sigma$ where $\tau$ performs label smoothing resulting in a peaked bimodal distribution.  

\begin{equation}\label{eq:kld}
    D_{\mathrm{KLD}}(\tilde{\vec{y}}^{\mathcal{T}} || \tilde{\vec{y}}^{\mathcal{S}}) = \mathbb{H}(\tilde{\vec{y}}^{\mathcal{T}}) - \tilde{\vec{y}}^{\mathcal{T}} \log(\tilde{\vec{y}}^{\mathcal{S}})
\end{equation}

Instead of using contrastive representation mixup with the KLD distillation objective, we also use it to mix between latent representations in the CL setting whereby representations of negative and positive samples are mixed to produce pseudo-\textit{hard} negative sample representations. In this case $\tilde{\vec{z}}^{\mathcal{S}}:= \kappa(\vec{z}^{\mathcal{S}}_i, \vec{z}^{\mathcal{T}}_j)$ and similarly for the teacher network as shown in Line 12 of Algorithm \ref{alg:scns}.

\subsection{Theoretical Analysis of Conditioned Sampling}
In this subsection we reformulate the MI lower bound to include the notion of semantic similarity between negative samples and their corresponding anchor. We then describe the difference in sample complexity between USNs and SCNS w.r.t. observing the top-$k$ negative samples in the training data. We use the InfoNCE loss~\cite{oord2018representation} with SCNS for our experiments, as shown in~\autoref{eq:infonce_loss},

\begin{equation}\label{eq:infonce_loss}
    \ell = \underset{\substack{(\vec{x}_{*}, \vec{x}_{+}) \ \sim \ \mathcal{D}_{+} \\ \vec{x}_{-} \sim \mathcal{D}_{-} \propto \ P_{x}}}{\mathbb{E}} \Bigg[- \log \Bigg( \frac{\exp(\vec{z}^{\top}_{*} \vec{z}_{+})}{\exp(\vec{z}^{\top}_{*} \vec{z}_{+}) + \exp(\vec{z}^{\top}_{*} \vec{z}_{-})} \Bigg) \Bigg]
\end{equation}

where $\vec{x}_{-}$ is conditioned on the distribution $P_x$ as described in \autoref{sec:scns}. By minimizing the InfoNCE loss $\ell$, we maximize the MI between the positive pair $(\vec{x}_{*}, \vec{x}_{+})$. The optimal score for $f(\vec{x}_{*}, \vec{x}_{+})$ is given by
$p(x_{*}|x_{+})/p(x_{*})$, substituting this into ~\autoref{eq:infonce_loss} and splitting $\vec{x}$ into positive and negative samples $X_{-}$ gives:

\begin{equation}
\begin{gathered}
    \ell = - \underset{X}{\mathbb{E}} \log \Bigg[ \frac{p(x_{*}|x_i)/p(x_{*})}{\frac{p(x_{*}|x_{+})}{p(x_{*})} + \sum_{x_i \in X_{-}}\frac{p(x_i|x_{+})}{p(x_i)}} \Bigg] \\ 
    =\underset{X}{\mathbb{E}}\log \Bigg[1 + \frac{p(x_{*})}{p(x_{*}|x_{+})} + \sum_{x_i \in X_{-}} \frac{p(x_i|x_{+})}{p(x_i)}\Bigg]\\
    \approx \underset{X}{\mathbb{E}}\log \Bigg[1 + \frac{p(x_{*})}{p(x_{*}|x_{+})} (M-1) \underset{x_i}{\mathbb{E}}\frac{p(x_i|x_{+})}{p(x_i)}\Bigg] \\
    =  \underset{X}{\mathbb{E}}\log \Big[1 + \frac{p(x_{*})}{p(x_{*}|x_{+})} (M-1) \Big]
    \geq \underset{X}{\mathbb{E}}\log \Big[\frac{p(x_{*})}{p(x_{*}|x_{+})} M\Big] \\
    = - I(x_{*}, x_{+}) + \log(M) \label{eq:mi_lower_bound}
\end{gathered}
\end{equation}

From \autoref{eq:mi_lower_bound}, we see that $I(x_{*}, x_{+}) \geq \log(M) - \ell$ ~\cite{oord2018representation} and the larger the number of negative samples, $M$, the tighter the MI bound. 
However, we argue that if a pretrained $f^{\mathcal{T}}_{\omega}$ has training error close to $0$, then $\log(M)$ should be replaced with a term that accounts for the geometry of the embedding space as not all negative samples are equally important for reducing $\ell$. Therefore, we express how top-$k$ samples from SCNS tightens the lower bound estimate on MI when compared to using USNs. Given that we are not restricted to a distance, divergence or similarity between vectors, we refer to a general \textit{alignment} function $A$ that outputs an alignment score $a \in [0, 1]$. 

Given $\vec{x}_{*}$, the expected alignment score for the top-$k$ negative samples is $a^{k}_{x_{-}} := \mathbb{E}_{x_{-} \sim D^{k}_x} [A(z_{*}, z_{-})]$ and for negative samples outside of the top-$k$ samples, $a^{r}_{x_{-}} := \mathbb{E}_{x_{-} \sim D^{r}_x } [A(z_{*}, z_{-})]$ where $ D^{r}_{x_{*}} \subseteq D, D^{k}_{x_{*}}  \not\in D^r_{x_{*}}$, $r = N - N_y - k$. and $N_y$ is the number of samples of class $y$. 
 The alignment weight (AW) $\Omega_x:= 1 - a^{k}_{x}/(a^{k}_{x} + a^{r}_{x})$ is then used to represent the difference in `closeness' between the top-$k$ negative samples and the remaining negative samples. 
\begin{lemma}\label{prop_2}
Given $\Omega := \sum_{i=1}^M \Omega_{x_{*}}$, we can reformulate the MI lower bound when using SCNS as \autoref{eq:new_mi_lb}.
\begin{equation}\label{eq:new_mi_lb}
     I(X, Y) \geq \ell + \log(2\Omega)
\end{equation} 
\end{lemma}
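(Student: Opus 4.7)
The plan is to mirror the derivation already carried out in \autoref{eq:mi_lower_bound}, replacing the step that relies on uniform negative sampling with one that uses the SCNS distribution $P_x$. In the standard chain, the key simplification $\mathbb{E}_{x_i}[p(x_i|x_+)/p(x_i)] = 1$ collapses the sum over negatives into a factor $(M-1)$, and then into $M$ after the subsequent $\log(1+y) \geq \log(y)$ relaxation. Under SCNS this expectation is taken over $D_x^k$ rather than all of $D$, so the factor of $M$ must be replaced with a quantity that reflects how the top-$k$ conditioned samples dominate the remaining ones; the coefficient $\Omega$ is precisely designed to measure this.

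First I would restart from the second line of the chain in \autoref{eq:mi_lower_bound},
$$\ell = \mathbb{E}_X \log\left[1 + \frac{p(x_*)}{p(x_*|x_+)} \sum_{x_i \in X_-} \frac{p(x_i|x_+)}{p(x_i)}\right],$$
and split the inner sum into the contribution from $D_x^k$ and the contribution from its complement $D_x^r$. I would then calibrate each density ratio against the alignment function $A$, so that the two partial sums become proportional to $a_x^k$ and $a_x^r$. The coefficient $\Omega_{x_*} = 1 - a_x^k/(a_x^k + a_x^r)$ then appears as the normalized imbalance between the two groups, and summing over the $M$ anchors produces $\Omega$.

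Next I would apply $\log(1+y) \geq \log(y)$ as in the uniform case and recognize $I(x_*, x_+) = \mathbb{E}[\log(p(x_*|x_+)/p(x_*))]$ to isolate the mutual information. The remaining factor should, after these identifications, combine the alignment-weighted sum into $2\Omega$; the constant $2$ is expected to come from merging the contributions of the top-$k$ group and the remainder group (one term for each partition of $X_-$) rather than counting each negative separately as in the uniform case. Rearranging then yields the stated inequality of \autoref{eq:new_mi_lb}.

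The main obstacle is the explicit coupling between the abstract alignment function $A$ and the density ratio $p(x|x_+)/p(x)$: the excerpt introduces $A$ only as a generic similarity or divergence with values in $[0,1]$, so the identification step requires either defining $A$ to be a monotone transform of the log density ratio, or invoking a consistency assumption that $P_x$ orders samples compatibly with $p(x|x_+)/p(x)$. The latter is plausible when $f^{\mathcal{T}}_{\omega}$ has near-zero training error, as the excerpt already argues, but it needs to be stated explicitly before the inequality chain can be closed; the algebraic manipulation that follows is then essentially a substitution into the derivation of \autoref{eq:mi_lower_bound}.
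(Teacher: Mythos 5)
Your plan follows the same underlying idea as the paper --- replace the $\log(M)$ slot in the chain of \autoref{eq:mi_lower_bound} with an alignment-weighted count $\log(2\Omega)$ --- but the paper's actual proof is far more terse than what you outline: it simply \emph{substitutes} $\log(2\Omega)$ for $\log(M)$ and justifies the substitution by observing that under uniform sampling $a^{k}_{x} \approx a^{r}_{x}$, hence $\Omega_{x_{*}} \approx 1/2$, $\Omega \approx M/2$, and $\log(2\Omega) \approx \log(M)$, so the new bound degenerates to the old one in that limit. The splitting of $\sum_{x_i \in X_-} p(x_i|x_+)/p(x_i)$ into top-$k$ and remainder contributions, and the calibration of each partial sum against $a^{k}_x$ and $a^{r}_x$, is never carried out in the paper; the lemma is in effect a definition-by-substitution rather than a derived inequality.

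This means the obstacle you flag at the end is not an artifact of your route --- it is a genuine gap in the paper's own argument. The alignment function $A$ is introduced only as a generic map into $[0,1]$, and no relation between $A(z_*, z_-)$ and the density ratio $p(x_-|x_+)/p(x_-)$ that actually appears in the InfoNCE chain is ever stated, so the step where $(M-1)\,\mathbb{E}_{x_i}[p(x_i|x_+)/p(x_i)]$ becomes $2\Omega$ cannot be closed without an explicit monotonicity or calibration assumption of the kind you describe. Two further points to be aware of: the definition $\Omega := \sum_{i=1}^{M}\Omega_{x_{*}}$ sums a quantity indexed by the anchor over the $M$ negatives (so it is really $M\,\Omega_{x_*}$, which is why $\Omega_{x_*}=1/2$ gives $\Omega=M/2$), not a sum over $M$ distinct anchors as your reading suggests; and the sign of $\ell$ in \autoref{eq:new_mi_lb} is inconsistent with the derivation it is meant to generalize, since \autoref{eq:mi_lower_bound} yields $I(x_*,x_+) \geq \log(M) - \ell$, so the substitution should produce $I \geq \log(2\Omega) - \ell$ rather than $I \geq \ell + \log(2\Omega)$. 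Your ``rearranging'' step would in fact give the former, so do not force it to match the stated form.
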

\begin{proof}
We substitute $\log (2\Omega)$ for $\log(M)$ in \autoref{eq:mi_lower_bound} as $a^{k}_{x} \approx a^{r}_x$ in uniform sampling as $M \to \infty$. 
\end{proof}
This lower bound favors top-$k$ negative samples that have alignment with the positive class boundary and are relatively close compared to the negative samples outside of the top-$k$. It is dependent on the loss of $f^{*}$ where $\ell \approx 0$ results in an accurate alignment estimation for all embedding pairs. 
\begin{lemma}\label{prop_3}
In the worst case, when all $M$ negatives are equidistant to $x_{*}$, forming a ring on the $L_2$ embedding hypersphere, SCNS is equivalent to uniform sampling. 
\end{lemma}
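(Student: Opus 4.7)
The plan is to show equivalence at two complementary levels: (i) the sampling distribution itself collapses to uniform, and (ii) the tightened lower bound of Lemma~\ref{prop_2} reduces exactly to the standard bound $\log(M)$ from \autoref{eq:mi_lower_bound}.

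First, I would unpack the hypothesis. If all $M$ candidate negatives $\vec{z}_{-,1},\ldots,\vec{z}_{-,M}$ lie on a sphere centered at $\vec{z}_{*}$ on the $L_2$ embedding hypersphere, then $\cos(\vec{z}_{*},\vec{z}_{-,i})$ is a common constant $c$ for every $i$. Consequently, for \emph{any} alignment function $A$ built from such an inner product (including the one used to define $\mat{Z}_{\vec{x}}^{k}$ in Instance-SCNS and $\mat{Z}_{\mathcal{V}}^{k}$ in Class-SCNS), the similarity scores are identical. Passing these equal logits through the row-wise softmax $\sigma(\cdot/\tau)$ that defines $\mat{P}_{\vec{x}}^{k}$ yields a uniform distribution over the $M$ negatives, so $\vec{x}_{-}\sim\mathcal{D}_{x}\propto\mat{P}_{x}$ is indistinguishable from uniform negative sampling. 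The top-$k$ selection becomes arbitrary tie-breaking and is therefore statistically equivalent to drawing $k$ points uniformly without replacement.

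Second, I would specialize the quantities in Lemma~\ref{prop_2} under this hypothesis. Since every negative achieves the same alignment $c$, the conditional expectations collapse to
\begin{equation*}
a^{k}_{x_{-}} \;=\; \mathbb{E}_{\vec{x}_{-}\sim D^{k}_{x}}[A(\vec{z}_{*},\vec{z}_{-})] \;=\; c \;=\; \mathbb{E}_{\vec{x}_{-}\sim D^{r}_{x}}[A(\vec{z}_{*},\vec{z}_{-})] \;=\; a^{r}_{x_{-}}.
\end{equation*}
Substituting into the alignment weight yields $\Omega_{x_{*}} = 1 - a^{k}_{x}/(a^{k}_{x}+a^{r}_{x}) = 1/2$, so $\Omega = \sum_{i=1}^{M}\Omega_{x_{*}} = M/2$ and hence $\log(2\Omega)=\log(M)$. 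Plugging back into \autoref{eq:new_mi_lb} recovers exactly $I(X,Y)\geq \ell + \log(M)$, which is the uniform-sampling bound of \autoref{eq:mi_lower_bound}.

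The main obstacle is not computational but conceptual: one must make precise what \emph{equivalent} means, because the Lemma is really a degeneracy statement. I would therefore spell out both senses (distribution-level and bound-level) and note that they coincide because the SCNS machinery differs from uniform sampling only through the anisotropy of the alignment scores; when that anisotropy vanishes, every downstream quantity (the sampling law, the weight $\Omega_x$, and the MI lower bound) reduces to its uniform counterpart. A minor subtlety is the choice of $A$: I would remark that the argument only uses that $A$ depends on $(\vec{z}_{*},\vec{z}_{-})$ through their inner product (or any monotone function of geodesic distance on the hypersphere), which covers all alignment functions considered in the paper.
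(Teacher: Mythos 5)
Your argument matches the paper's own proof: the paper likewise observes that equidistant negatives force $a^{k}_{x}=a^{r}_{x}$, hence $\Omega_{x}=1/2$, $\Omega=M/2$, and $\log(2\Omega)=\log(M)$, collapsing the SCNS bound of \autoref{eq:new_mi_lb} back to the uniform-sampling bound. Your additional distribution-level observation (equal similarities pass through the softmax to a uniform law, so top-$k$ selection degenerates to tie-breaking) is a sensible elaboration that the paper leaves implicit, but the core reasoning is the same.
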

\vspace{-.25em}
\begin{proof}
This holds as $\log(2\Omega) \approx \log(M)$ when the centroids of both sets $a^{k}_x = a^{r}_x$ and $\Omega = M/2$. Therefore, in the worst case SCNS is equivalent to uniform sampling. 
\end{proof}
\vspace{-.25em}
The above case can be due to degenerative representations used in the top-$k$ SCNS similarity computation (i.e $\epsilon_{tr}$ not close to 0) or a characteristic of the training data itself. Then, the relation between $M$ USNs and top-$k$ SCNSs can be formed as follows. Let $\Omega_{D_{x}^{\mathcal{U}}}$ be the AW of USNs for $x$ and $\Omega_{D^{k}_{x}}$ be the AW for the top-$k$ negative samples. When $|D_{x}^{\mathcal{U}}| \approx |D^{k}_{x}|$ we have,

\begin{equation}\label{eq:scns_milb}
    I(X, Y) \geq  \ell + \log(2\Omega_{D^k_x}) \geq \ell + \log(2\Omega_{\mathcal{U}})
\end{equation}

given the non-uniform prior over negative samples as defined in SCNS. For some $k \ll N$, $2(\Omega_{k} - \Omega_{\mathcal{U}})= 0$ is met when a subset of $D^{\mathcal{U}}_{x}$ negative samples have $\bar{z}^{\mathcal{U}}_{\tilde{x}} \approx \bar{z}^{k}_{x}$ where $\tilde{x}$ denotes the aforementioned subset and $\bar{z}^{k}_{x}$ is the centroid of the top-$k$ negative samples.

\paragraph{Number of Uniform Draws To Observe Top-$k$ SCNS}\label{sec:ccp_single}

We can now describe the expected number of USNs draws required to observe the top-$k$ samples at least once for a given $x_{*}$. Let $C_i$ denote the number of negative samples observed until the $i$-th new negative sample among the top-k samples is observed and $N$ is the total number of samples until all top-$k$ negative samples are observed.
Since $C=\sum^{k}_{i=1} C_i$, $\mathbb{E}[C] = \mathbb{E}\big[\sum_{i=1}^{k} C_i\big] = \sum_{i=1}^{k}\mathbb{E}\big[C_i\big]$
where $C_i$ follows a geometric distribution with parameter $(k+1-i)/M$. 
Therefore, $\mathbb{E}[C_i] = M/(k + 1 - i)$ and thus the expected number of draws is given by \autoref{eq:ccp}.
\begin{equation}\label{eq:ccp}
\mathbb{E}[C]= M \sum_{i=1}^k (k+1-i)^{-1} = M\sum_{i=1}^k i^{-1}
\end{equation}
\begin{figure*}[ht]
\centering     
\includegraphics[scale=0.5]{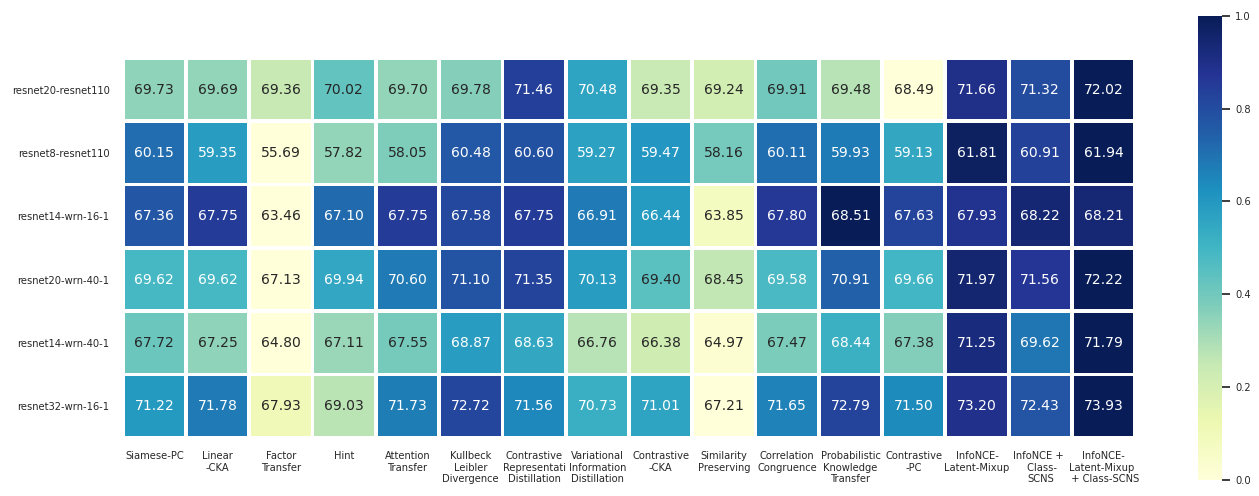}
\caption{CIFAR-100 Test Accuracy for Knowledge Distillation Approaches:\small{The y-axis model naming convention is $\langle$student-\# convolutional layers-teacher-\# convolutional layers-\# fully-connected -layers$\rangle$} and the x-axis denotes the KD method.}\label{fig:cifar100_heatmap}
\end{figure*}
We reformulate this for mini-batch training where consecutive batches of size $b$ for $x_{*}$ are drawn with replacement. This is a special case of the \emph{Coupon Collector's Problem}~\citep[(CCP)][]{von1954coupon}.

\begin{theorem}
 The batch variant of the CCP formulates the probability of the expected number of batches of size $b$ to observe top-$k$ SCNS samples at least once as \autoref{eq:bccp} 
\begin{gather}\label{eq:bccp}
    \sum_{j=0}^{\infty} P(K > ib) = \sum_{j=0}^{M-1}(\text{-}1)^{M-j+1}\binom{M}{j}\frac{1}{1 - (\frac{j}{M})^{b}}
\end{gather}
\end{theorem}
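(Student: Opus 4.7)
The plan is to start from the standard tail–sum identity $\mathbb{E}[B]=\sum_{i=0}^{\infty}P(B>i)$ for a nonnegative integer random variable, where $B$ denotes the number of batches of size $b$ drawn with replacement until all top-$k$ targets have appeared. The key observation is that $B>i$ occurs if and only if $K>ib$, where $K$ is the total number of individual draws required, since the first $i$ batches correspond to the first $ib$ draws. This reduces the batched problem to evaluating the survival function of the classical CCP at multiples of $b$, i.e.\ $\sum_{i=0}^{\infty} P(K>ib)$, which is exactly the left-hand side of~\eqref{eq:bccp}.

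Next I would compute $P(K>n)$ by inclusion–exclusion. Letting $A_j$ be the event that the $j$-th of the $M$ target coupons is missing from $n$ i.i.d.\ uniform draws, each of the $\binom{M}{j}$ intersections of $j$ such events has probability $((M-j)/M)^n$, giving
\begin{equation*}
P(K>n)=P\!\left(\bigcup_{j=1}^{M} A_j\right)=\sum_{j=1}^{M}(-1)^{j+1}\binom{M}{j}\left(\frac{M-j}{M}\right)^{n}.
\end{equation*}
Substituting $n=ib$ yields a double sum in $i$ and $j$.

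I then swap the order of summation (justified by absolute convergence for $j<M$, and by the convention $0^{ib}=0$ for $i\ge 1$ when $j=M$). The inner sum over $i\ge 0$ becomes the geometric series
\begin{equation*}
\sum_{i=0}^{\infty}\left(\frac{M-j}{M}\right)^{ib}=\frac{1}{1-\bigl((M-j)/M\bigr)^{b}},
\end{equation*}
and a change of index $j\mapsto M-j$ converts $\binom{M}{j}$ into $\binom{M}{M-j}=\binom{M}{j}$, the sign factor $(-1)^{j+1}$ into $(-1)^{M-j+1}$, and the ratio $(M-j)/M$ into $j/M$, matching the right-hand side of~\eqref{eq:bccp} summed over $j=0,\dots,M-1$.

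The only subtle point I foresee is justifying the interchange of summation together with the boundary case, since the term corresponding to $j=M$ in the first form (equivalently $j=0$ after re-indexing) contributes $1/(1-0^{b})=1$, which equals $P(K>0)=1$ and accounts for the single $i=0$ contribution while vanishing for all $i\ge 1$. Once this edge case is handled cleanly, the rest is a routine geometric–series evaluation and reindexing, so this is where I would focus the bookkeeping.
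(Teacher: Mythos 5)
Your proof is correct and follows essentially the same route as the paper: expand $P(K>ib)$ by inclusion--exclusion over which of the $M$ coupons are still missing, interchange the order of summation, evaluate the resulting geometric series, and reindex to obtain the stated form. The only cosmetic difference is that the paper phrases the first step via Stirling numbers of the second kind (the surjection-counting formula $P(K\le n)=\frac{M!}{M^{n}}\genfrac{\lbrace}{\rbrace}{0pt}{}{n}{M}$) before applying the same inclusion--exclusion identity, whereas you work directly with the union of ``coupon $j$ missing'' events; your handling of the $j=M$ boundary term is in fact more careful than the paper's.
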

\begin{proof}
See \autoref{sec:scns_complexity} for the proof. 
\end{proof}
As $M$ grows more mini-batches are required to observe the top-$k$ \textit{hard} negative samples. Thus, $b$ has to be larger for uniform sampling to cover the informative negative samples, coinciding with the MI formulation in \autoref{eq:scns_milb}. Further justifications are found in  \autoref{sec:mi_and_scns} and \autoref{sec:scns_complexity}. 
\section{Experiments}\label{sec:experiments}
We now discuss the experimental results and note that additional details on hardware, datasets, model architectures and settings are found in \autoref{sec:hardware}, \ref{sec:architectures} and \ref{sec:experiment_details}. 

\paragraph{Standard Supervised Learning}
We first test our three NS approaches in the standard supervised learning setting on CIFAR-10. Several ResNet-based architectures~\cite{he2016deep,zagoruyko2016wide,xie2017aggregated} are trained with (1) cross entropy between LM representations and the cross-entropy between student predictions and targets (Cross-Entropy + LM) and CL (InfoNCE + LM), (2) only using CL with the LM representations of each contrastive pair (InfoNCE-LM) and (3) Class-SCNS (InfoNCE + Class-SCNS) and Instance-SCNS (InfoNCE + Instance-SCNS) with the InfoNCE loss.

\begin{table}[ht]
\centering
\resizebox{.48\textwidth}{!}{%
\begin{tabular}{l|llllll}
\toprule

Methods  & resnet20 & resnet32 & resnet110 & wrn-16-1 & wrn-40-1 & resnext32x4 \\

\midrule
\midrule
Cross-Entropy & 91.14 & 92.49 & 93.38 & 94.06 & 94.47 & 95.38 \\
\midrule
Cross-Entropy + LM ($\beta$=$0.5$) & 92.74 & 92.73 & 93.53 & 94.26 & 94.54 & 95.47 \\

\midrule
InfoNCE & 91.68 & 92.90 & 94.01 & 94.39 & 95.23 & 95.77 \\

\midrule
-LM ($\beta$=$0.01$) & 91.72 & 92.93 & 94.09 & 94.48 & 95.55 & 95.87  \\
-LM ($\beta$=$0.05$) & 91.90 & 93.08 & 94.24 & 94.67 & 94.81 & 95.89 \\
-LM ($\beta$=$0.1$) & 91.97 & 93.17 & 94.42 & 94.81 & 94.95 & 96.01  \\
-LM ($\beta$=$0.2$)  & 92.17 & 93.12 & 94.63 & 94.99 & 94.93 & 96.07 \\
-LM ($\beta$= $0.5$)  & \textbf{92.38} & \textbf{93.21} & \textbf{94.85} & \textbf{95.11} & \textbf{95.09} & \textbf{96.22} \\

\midrule
 + LM ($\beta$=$0.01$) & 91.92 & 93.12 & 94.17 & 94.53 & 95.68 & 95.97 \\
 + LM ($\beta$=$0.05$) & 91.98 & 93.20 & 94.53 & 94.95 & 94.73 & \textbf{96.94} \\
 + LM ($\beta$=$0.1$) & 92.04 & 93.49 & 94.44 & 94.79 & 94.99 & 96.24 \\
 + LM ($\beta$=$0.2$)  & 92.40 & 93.38 & 94.92 & 95.07 & 95.05 &  96.11 \\
 + LM ($\beta$=$0.5$)  & \textbf{92.96}$^{\dag\dag}$ & 93.56  & \textbf{95.02} & \textbf{95.29}$^{\dag\dag}$ & \textbf{95.13} & 96.08\\

\midrule
+ Class-SCNS ($k = 1$) & 92.04 & 93.08 & 94.61 & 93.97 & 95.52 & 95.82 \\
+ Class-SCNS ($k = 2$) & \textbf{92.44} & 93.10 & 94.83 & 94.42 & \textbf{96.03}$^{\dag\dag}$ & \textbf{96.09} \\
+ Class-SCNS ($k = 5$) & 91.98 & 93.02 & \textbf{94.43} & \textbf{93.88} & 95.23 & 95.77 \\

\midrule
+ Instance-SCNS ($k$=$|D|/5$) & 92.01 & 93.27 & 94.34 & 93.49 & 95.28 & 95.19 \\
+ Instance-SCNS ($k$=$|D|/10$) & 92.12 & 93.29 & 94.09 & 93.58 & 95.32 & 95.41 \\
+ Instance-SCNS ($k$=$|D|/20$) & 92.20 & 93.25 & 94.83 & 94.14 & 95.66 & 95.99 \\
+ Instance-SCNS ($k$=$|D|/100$) & \textbf{92.42} & \textbf{94.39} $^{\dag\dag}$ & \textbf{95.11}$^{\dag\dag}$ & \textbf{94.56} & \textbf{95.91} & 96.09 \\
-Instance SCNS ($k$=$|D|/500$) & 92.38 & 93.57 & 95.02 & 93.36 & 95.72 & \textbf{96.27}$^{\dag\dag}$\\
\bottomrule
\end{tabular}%
}
\caption{
\small{
Test \emph{accuracy} (\%) of student networks on CIFAR-10.
}}
\label{tbl:cifar10_ablation_results}
\end{table}


~\autoref{tbl:cifar10_ablation_results} shows this ablation, where bolded results represent the best performance within the horizontal lines of that section and $^{\dag\dag}$ corresponds to the best performance overall for the respective architecture. For `InfoNCE-LM' and `InfoNCE + LM', $\beta = 0.01$ corresponds to a slight mixing of negative pair latent representations and $\beta = 0.5$ leads to a U-shape probability density function in $[0, 1]$, leading to increased LM.

\begin{figure}
\centering
\includegraphics[scale=0.5]{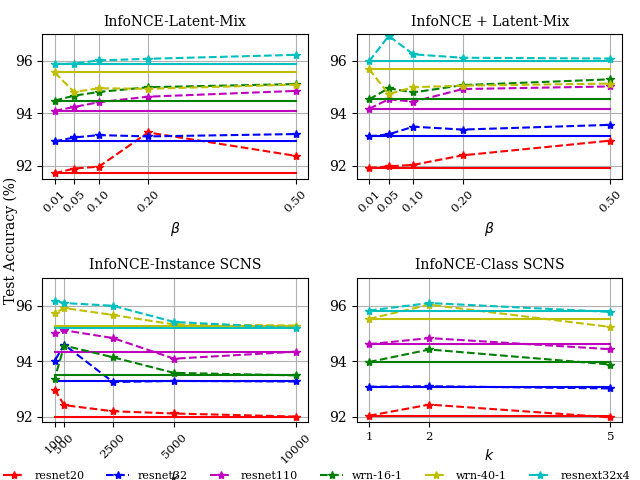}
\caption{Effect of $\beta$ in Latent Mixup and $k$ in SCNS}\label{fig:scns_analysis}
\end{figure}
We find that using the original negative samples and the latent mixture features (InfoNCE + LM) improves over only using the LM features (InfoNCE-LM). We also find that using LM with label smoothing improves cross-entropy training over only using cross-entropy training and that increased LM ($\beta = 0.5$) improves performance for both InfoNCE + LM and InfoNCE-LM. Hence, this suggests LM performs well for both point-wise and pairwise based supervised training. We note that the regularization term for the LM loss is manually searched over settings $\alpha \in [0.01, 0.05, 0.1, 0.2]$ on a validation dataset which is 5\% randomly sampled from the predefined CIFAR-10 training data. The results reported in \autoref{tbl:cifar10_ablation_results} are with a regularization term set $\alpha = 0.1$.~\autoref{fig:scns_analysis} visualizes the effect of changing $\beta$ for LM  and $k$ for Class-SCNS and Instance-SCNS for 5 different ResNet-based architectures. We find that on average InfoNCE + LM models outperforms InfoNCE-LM that setting $\beta \approx 0.5$. For Instance-SCNS, we find that $k = 500$ for Wide-ResNet architectures and $k = 100$ for ResNet architectures leads to increases in test accuracy.  

\paragraph{Knowledge Distillation} We now discuss results of the KD experiments on CIFAR-100~\cite{krizhevsky2009learning}.~\autoref{fig:cifar100_heatmap} shows the test accuracies for each KD method where the student-teacher pair is different for each row and the colours correspond to [0-1] row normalized test accuracies to visualize the relative percentage increase or decrease between each KD method\footnote{The naming conventions of baselines, including \emph{our own} KD baselines are described in \autoref{sec:architectures}.} 
\begin{table*}[ht]
\centering

\resizebox{2.\columnwidth}{!}{%

\begin{tabular}{ll|llllllllll}
\toprule
& \thead{Teacher Network \\ Student Network}  & 
\thead{resnet152\\resnet56} & 
\thead{resnet152\\resnet44} & 
\thead{resnet50 \\ resnet20} & 
\thead{resnet50 \\ resnet32} & 
\thead{resnet152 \\ resnet32} &
\thead{densenet121 \\ resnet32} &
\thead{resnet50 \\  resnet8} &
\thead{resnet50 \\  resnet14} &
\thead{resnet34 \\  resnet14} &
\thead{resnet50 \\  resnet32x4} \\

\toprule

& KLD~\cite{hinton2015distilling} & 52.22 & 51.29 & 47.80 &50.69 & 50.90 &   48.61 & 42.05 &44.91 &45.63 &  $62.44^{\dag\dag}$ \\

\cmidrule{2-12}
\cmidrule{2-12}

\parbox[t]{2mm}{\multirow{7}{*}{\rotatebox[origin=c]{90}{\textbf{SoTA Baselines}}}}

& AT~\cite{zagoruyko2016paying}  & 43.81 (-8.41) & 47.45 (-3.84) & 45.48 (-2.32) & 47.47 (-3.89) & 47.01 (-2.22) &  46.39 (-1.82) & 39.45 (-2.60) & 44.56 (-0.35) &45.20 (-0.43) &  54.49 (-7.95) \\

& SP~\cite{tung2019similarity} & 48.12 (-4.10) & 47.29 (-4.05) & 44.30 (-3.50) & 45.50 (-4.45) & 46.45 (-2.26) & 46.35 (-1.86) & 38.47 (-3.57) & 43.51 (-1.39) & 44.08 (-1.55) & 53.07 (-9.37) \\

& PKT~\cite{passalis2018learning} & 49.14 (-3.08) & 48.82 (-2.47) & 46.42 (-1.38) & 48.03 (-3.21) & 47.69 (-0.39) & 48.22 (+0.01) & 40.10 (-1.95) & 45.18 (\textbf{+0.27}) & 45.05 (-0.58) &  55.00 (-7.43) \\

& CC~\cite{peng2019correlation}  & 42.35 (-9.87) & 45.27 (-6.02) & 46.59 (-1.21) & 46.56 (-4.15) & 46.75 (-1.41) & 47.20 (-1.01) & 40.08 (-1.97) & 45.02 (\textbf{+0.11}) & 44.10 (-1.53) & 49.06 (-13.37) \\

& VID~\cite{ahn2019variational} & 48.49 (-3.73) & 48.12 (-3.17) & 45.59 (-2.21) & 47.63 (-2.81) & 48.09 (-4.98) & 43.63 (-4.57) & 40.65 (-1.40) & 44.40 (-0.51) & 45.45 (-0.18) &  54.75 (-7.69) \\

& FT~\cite{kim2018paraphrasing} & 44.19 (8.03) & 47.59 (-3.70) & 46.03 (-1.77) & 45.28 (-3.72) & 47.18 (-9.57) & 39.04 (-9.17) & 39.67 (-2.38) & 38.48 (-6.43) & 44.25 (-1.38) &  52.31 (-10.12) \\

& CRD~\cite{tian2019contrastive}  & 50.42 (-1.79) & 47.98 (-3.31) & 46.92 (-0.88) & 49.15 (-1.54) & 49.35 (-1.75) &  49.21 (\textbf{+0.60}) & 39.86 (-2.18) & 44.34 (-0.57) & 45.28 (-0.35) &  56.91 (-5.53) \\

\cmidrule{1-12}
\cmidrule{1-12}

\parbox[t]{2mm}{\multirow{9}{*}{\rotatebox[origin=c]{90}{\textbf{Ours}}}}

& Siamese-PC & 51.99 (-0.23) & 47.11 (-4.18) & 45.64 (-2.16) & 46.42 (3.99) & 46.91 (-1.81) & 46.80 (-1.41) & 38.03 (-4.02) & 43.83 (-1.08) &44.59 (-1.04) & 52.23 (-10.21) \\

& Contrastive-PC & 45.18 (-7.03) & 47.19 (-4.10) & 45.52 (-2.28) & 46.64 (-0.01) & 50.89 (-1.40) & 47.21 (-1.0) & 39.40 (-2.65) & 44.91 (0.0) & 44.09 (-1.54) & 54.22 (-8.21) \\

& Contrastive-CKA (RBF) & 47.57 (-4.65) & 48.71 (-2.58) & 46.29 (-1.51) & 46.87 (-3.59) & 47.31 (-1.54) & 47.07 (-1.14) & 41.28 (-0.77) & 47.05 (\textbf{+2.14}) & 44.76 (-0.87) &  52.50 (-9.93) \\

& Contrastive-CKA (Linear)  & 47.87 (-4.35) & 47.90 (-3.39) & 45.89 (-1.91) & 47.42 (-3.27) & 41.13 (-1.62) & 46.99 (-1.21) & 40.03 (-2.02) & 44.63 (-0.28) & 44.51 (-1.12) &  54.16 (-8.28)\\

\cmidrule{2-12}
\cmidrule{2-12}

& CRD-LM & 52.00 (- 0.22) & 50.28 (-1.01) & 48.12 (\textbf{+0.32}) & 50.23 (-1.22) & 49.68 (\textbf{+1.10}) & 49.71 (\textbf{+1.50}) & 42.18 (\textbf{+0.13}) & 46.08 (\textbf{+1.17}) &45.73 (\textbf{+0.09}) &  58.53 (-3.91) \\

& InfoNCE + Class-SCNS & 53.03 (+0.61) & 49.89 (-1.40) & 47.02 (-0.78) & 49.80 (-1.28) & 49.62 (\textbf{+ 0.60}) & 49.22 (\textbf{+0.61}) & 41.52 (-0.53) & 45.39 (\textbf{+0.48}) & 45.12 (-0.51) & 58.36 (-4.08) \\

& InfoNCE + Instance-SCNS & 52.24 (\textbf{+ 0.02}) & 51.13 (-0.16) & 47.44 (-0.63) & 50.04 (-.75) & 50.27 $(\textbf{+1.77})^{\dag\dag}$ & 50.38 (\textbf{+1.77}) & 41.29 (-0.76) & 46.12 (\textbf{+1.21}) & 45.80 (\textbf{+0.17}) & 58.95 (-3.49) \\

& InfoNCE-LM + Class-SCNS & 52.83 (\textbf{+ 0.61}) & 51.33 (\textbf{+0.04}) & 48.72 (\textbf{+0.92}) & 50.83 (-0.75) & 50.15 (\textbf{+1.29}) & 49.90 (\textbf{+1.69}) & 43.44 (+\textbf{1.39}) & 46.99 (\textbf{+2.08}) & 46.01 (\textbf{+0.38}) & 59.25 (-3.19) \\

& InfoNCE-LM + Instance-SCNS & 53.21 $(\textbf{+ 0.99})^{\dag\dag}$ & 52.72 $(\textbf{+1.43})^{\dag\dag}$ & 49.07 $(\textbf{+1.27})^{\dag\dag}$ & 51.18 (-0.48) & 50.42 (\textbf{+1.50}) & 50.11 $(\textbf{+1.89})^{\dag\dag}$ & 43.89 $(\textbf{+1.84})^{\dag\dag}$ & 47.32 $(\textbf{+2.41})^{\dag\dag}$ & 46.83 $(\textbf{+1.20})^{\dag\dag}$ & 60.09 (-2.35) \\

\bottomrule
\end{tabular}%
}

\caption{
\small{
Test \emph{accuracy} (\%) of student networks on Tiny-ImageNet-200 
}}
\label{tbl:tiny_imagenet_negative_sampling_results}
\end{table*}
We find that combining LM with InfoNCE + Instance-SCNS with a lookup table outperforms only using LM or instance-level SCNS with a lookup table. Moreover, `InfoNCE-LM + Class-SCNS' outperforms all other KD methods for all but one student-teacher pairing (PKT found to have the highest test accuracy for `resnet14-wrn-16-1' student-teacher pair). 
However, the original KLD distillation loss remains a very strong baseline that is competitive with CL and even outperforms our proposed non-contrastive baselines. We find a 0.24 point increase in the 0-1 normalized average score (`Kullbeck Leibler Distillation' = 0.75 and `InfoNCE-LM + Class-SCNS'=0.99).
We also find w.r.t. the student-teacher network capacity gap, increasing the capacity of the teacher network does not necessarily lead to improved student network performance if the gap is large. The performance difference between `resnet14-wrn-16-1' and `resnet14-wrn-40-1' is relatively small and `resnet14-wrn-40-1' has higher accuracy than `resnet14-wrn-16-1' in only 7/16 different loss functions. However, in 3/4 of the CL cases (4 rightmost columns of ~\autoref{fig:cifar100_heatmap}) the larger teacher network in `resnet14-wrn-40-1' has significantly improved accuracy. 
\autoref{fig:ctime} shows the convergence time comparing InfoNCE when using USNs and SCNS for the resnet32-wrn-16-1 student-teacher pair on CIFAR-100. We see that Instance-SCNS converges after 106 training epochs, Class-level SCNS at 124 epochs while USNs converges at 181 epochs. Hence, both test accuracy and convergence time is improved by sampling hard negative samples via SCNS. 

\begin{figure}[ht]
\centering
\includegraphics[scale=0.48]{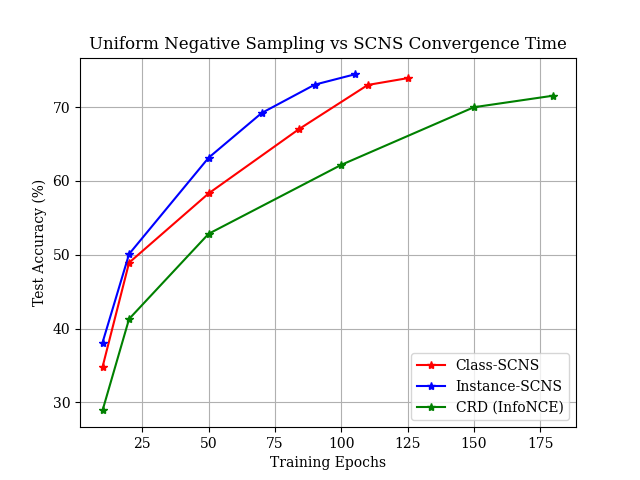}
\caption{CIFAR-100 Convergence Time Comparison}\label{fig:ctime}
\end{figure}

\paragraph{Distilling Transfer Learned Representations}
To test how SCNS performs in the transfer learning setting, we learn a student network from teacher network that takes inputs and outputs of different sizes. 
We use pretrained ImageNet models and fine-tune them on Tiny-ImageNet-200 by replacing the last 1000 dimensional linear layer with a 200 dimensional layer. The pretrained models are fine-tuned by resizing Tiny-ImageNet-200 images from 64x64 to 256x256 without any additional data augmentation. These models are used as the teacher networks that take in 256x256 images while the student network takes the original 64x64 input. This KD setup is slightly different as the teacher network is fine-tuned using transfer learning from the original ImageNet dataset, not from random initialization. 

\begin{table}[ht]
\centering
\resizebox{.5\textwidth}{!}{\begin{minipage}{\textwidth}

\begin{tabular}{l|llllll}
\toprule

\thead{Teacher Network \\ Student Network} &
\thead{resnet110\\resnet20} & 
\thead{resnet110\\resnet8} & 
\thead{wrn-16-1 \\ resnet14} & 
\thead{wrn-16-1 \\ resnet32} & 
\thead{wrn-40-1 \\ resnet20} &
\thead{wrn-40-1 \\  resnet14} \\

\midrule
\midrule
Teacher (Cross-Entropy) & 74.31 & 74.31 & 73.11 & 73.40 & 75.43 &  75.43  \\
Student (Cross-Entropy) & 66.26 & 57.31 & 65.84 & 69.80 & 67.69 & 68.02 \\
Student (Cross-Entropy + KLD) & 69.78 & 60.48 & 67.58 & 71.10 & 68.87 & 72.72 \\

\midrule
InfoNCE & & & & & & \\
\midrule
-LM ($\beta = 0.01$) & 68.13 & 58.32 & 67.11 & 70.09 & 69.02 & 68.55 \\
-LM ($\beta = 0.05$) & 68.15 & 58.41 & 67.25 & 69.98 & 69.27 & 68.80 \\
-LM ($\beta = 0.1$)  & 68.22 & 58.89 & 67.48 & 70.49 & 69.49 & 69.39 \\
-LM ($\beta = 0.2$) & 68.29 & \textbf{59.13} & 67.59 & \textbf{71.42} & 69.88 & 69.97 \\
-LM ($\beta = 0.5$) & \textbf{68.49} & 59.07 & \textbf{67.63} & 69.66 & \textbf{69.91} & \textbf{70.90} \\

\midrule
\midrule
+LM ($\beta = 0.01$) & 68.71 & 58.31 & 67.23 & 70.33 & 68.92 & 68.52 \\
+LM ($\beta = 0.05$) & 68.83 & 58.78 & 67.61 & 70.57 & 69.10 & 68.80 \\
+LM ($\beta = 0.1$)  & 68.96 & 59.01 & 68.45 & \textbf{70.92} & 69.48 & 69.04 \\
+LM ($\beta = 0.2$) & 69.08 & \textbf{61.09} & 69.58 & 70.83 & 71.03 & 70.24 \\
+LM ($\beta = 0.5$)  & \textbf{69.25} & 60.70 & \textbf{70.09} & 69.66 & \textbf{71.29} & \textbf{71.43} \\

\midrule
\midrule
-Class SCNS ($k = |V|/100$) & 70.03 & 63.20 & 68.48. & 69.31 & 69.44 & 70.41 \\
-Class SCNS ($k = |V|/50$)  & 70.88 & \textbf{64.03} & \textbf{69.73} & 71.09 & 69.62 & \textbf{72.83} \\
-Class SCNS ($k = |V|/20$) & $\textbf{71.32}^{\dag\dag}$ & 63.91 & 69.36 & 71.56 & $\textbf{71.37}^{\dag\dag}$ & 72.21 \\
-Class SCNS ($k = |V|/10$) & 71.03 & 62.14 & 68.73 & \textbf{72.02} & 71.02 & 72.08 \\
-Class SCNS ($k = |V|/5$) & 69.35 & 61.51 & 68.05 & 70.99 & 69.81 & 71.76 \\

\midrule
\midrule
-Instance SCNS ($k = |D|/100$)  & 70.24 & 63.69 & 68.92 & 70.01 & 69.75 & 70.20 \\
-Instance SCNS ($k = |D|/50$) & 71.21 & 64.99 & 69.23 & $\textbf{72.94}^{\dag\dag}$ & 69.70 &   $\textbf{73.10}^{\dag\dag}$ \\
-Instance SCNS ($k = |D|/20$) & \textbf{71.25} & $\textbf{65.13}^{\dag\dag}$ & $\textbf{70.11}^{\dag\dag}$ & 71.87 & 69.31 & 72.09 \\
-Instance SCNS ($k = |D|/10$) & 71.04 & 64.87 & 70.03 & 71.83 & \textbf{70.14} & 72.02 \\
-Instance SCNS ($k = |D|/5$) & 70.93 & 64.84 & 69.82 & 70.53 & 69.31 & 71.80 \\
\bottomrule

\end{tabular}
\end{minipage}}

\caption{
\small{
An ablation of efficient NS techniques on CIFAR-100. 
}}
\label{tbl:cifar100_ablation_results}
\end{table}
\autoref{tbl:tiny_imagenet_negative_sampling_results} shows each KD technique along with our proposed techniques from row `Contrastive-PC' to the last row. In almost all student-teacher network combinations, Instance-SCNS, Class-SCNS samples and both with Contrastive LM regularization have led to performance improvements over all previously proposed KD methods. However, the original KLD distillation loss~\cite{hinton2015distilling} remains a very strong baseline. We also find that increasing the capacity of the teacher network for the same sized student network can result in the same or poorer performance if the original student-teacher network capacity difference is large. For example, if we compare `-resnet14-wrn-16-1' to `resnet14-wrn-40-1" we can see there is little difference in performance across the different KD methods. However, increasing the student network size closer to the teacher network leads to improved performance e.g `resnet32-wrn-16-1' consistently improves over `resnet14-wrn-16-1'.

~\autoref{tbl:cifar100_ablation_results} shows the ablation of all three proposed methods on CIFAR-100. The most consistent gain in performance is found when using Instance-SCNS as it achieves the best performance for 4 out of 6 student networks. Class-SCNS performs the best for resnet20 student networks, which have relatively larger capacity compared to resnet8 and resnet14. 


\section{Conclusion}
We proposed (1) semantically conditional negative sampling, a method that use pretrained networks to define a negative sampling distribution and (2) latent mixup, a simple strategy to form \textit{hard} negative samples. We found that when used in a contrastive learning setting, both proposals consistently outperform previous knowledge distillation methods and improve contrastive learned models in the standard supervised learning setup.

\bibliography{icml2021}
\bibliographystyle{icml2021}

\appendix

\section{Hardware Details}\label{sec:hardware}
All experiments were run on a Titan RTX P8 24G memory graphics processing unit. 

\section{Network Architectures}\label{sec:architectures}
The below CNN architectures are used for standard supervised learning experiments on CIFAR10 and for KD experiments on CIFAR100 and Tiny-ImageNet-200. 

\begin{itemize}[topsep=0pt,itemsep=-0.5ex,partopsep=1ex,parsep=1ex]
    \item Wide Residual Network~\citep[WRN;][]{zagoruyko2016wide}. WRN-d-w represents wide resnet with depth d and width factor w.
    \item ResNet~\cite{he2016deep}. We use resnet-d to represent cifar-style resnet with 3 groups of basic blocks,
each with 16, 32, and 64 channels respectively. In our experiments, resnet8x4 and resnet32x4 indicate a 4 times wider network (namely, with 64, 128, and 256 channels for each of the block).
    \item ResNet~\cite{he2016deep}. ResNet-d represents ImageNet-style ResNet with Bottleneck blocks and more channels.
    \item ResNeXt~\cite{xie2017aggregated}. ResNeXt  ImageNet-style ResNet with Bottleneck blocks and more channels.
    \item VGG~\cite{simonyan2014very}. The pretrained VGG network are adapted from its original ImageNet counterpart.
    \item DenseNet~\cite{huang2017densely}. We use a pretrained ImageNet DenseNet-121 and fine-tune on Tiny-ImageNet-200 with upscale images (64x64 to 256x256). 
\end{itemize}

\begin{table*}[ht]
    \centering
\resizebox{1.85\columnwidth}{!}{%
\begin{tabular}{l|l}
Abbreviation & KD Method \\
\hline
KLD & Knowledge Distillation~\citep[KD;][]{hinton2015distilling} - \textbf{Kullbeck Leibler Divergence} \\ 
Fitnets & \textbf{Fitnets}: Hints for thin deep nets~\cite{romero2014fitnets} \\
AT & \textbf{Attention Transfer}~\citep[AT;][]{zagoruyko2016paying}\\
SP & \textbf{Similarity-Preserving} Knowledge Distillation~\citep[SP;][]{tung2019similarity}\\
CC & \textbf{Correlation Congruence}~\citep[CC;][]{peng2019correlation}\\
VID & \textbf{Variational information distillation} for knowledge transfer~\citep[VID;][]{ahn2019variational} \\
RKD & \textbf{Relational Knowledge Distillation}~\citep[RKD;][]{park2019relational} \\
PKT & Learning deep representations with \textbf{probabilistic knowledge transfer}~\citep[PKT;][]{passalis2018learning}  \\
FT & Paraphrasing complex network: Network compression via \textbf{factor transfer}~\citep[FT;][]{kim2018paraphrasing}  \\ 
FSP & A gift from knowledge distillation: Fast optimization, network minimization and transfer learning~\citep[FSP;][]{yim2017gift}  \\
\bottomrule
\end{tabular}}
\caption{Abbreviations for Knowledge Distillation Baselines}
\label{tab:kd_types}
\end{table*}

\section{Experiment Details}\label{sec:experiment_details}

\subsection{Conditional Negative Sampling Details}
Before running experiments for supervised learning and knowledge distillation, we must first define the negative sampling distribution on the instance-level or class-level. For class-level SCNS we use cross-modal transfer by using word embeddings for the class labels. We use $\texttt{skipgram}$ word vectors~\cite{mikolov2013distributed} that are pretrained on GoogleNews and can be retrieved from \url{https://code.google.com/archive/p/word2vec/}. For class labels that are phrases, we average the pretrained word embeddings of each constituent embedding prior to computing cosine similarity. 
We find best results for our proposed method with the temperature $\tau=5$ when constructing $\mat{P}$. This ensures that the distribution is not too flat and encourages tighter coupling of neighbours.

For instance-level SCNS, pair similarity is defined by a pretrained network of the same type that is used for training in the supervised learning setting. For knowledge distillation, the teacher network is used to define the pair similarity.

\subsection{Dataset and Model Details}
For all models used in the standard supervised learning and KD settings, we use the cross-entropy loss optimized using Stochastic Gradient Descent (SGD) with a decay rate (different setting for each task). Additionally, hyperparameter tuning of $\beta$ and $k$ is tested on a randomly sampled 5\% of the predefined training data of all three datasets. 

\textbf{CIFAR-10} For CIFAR-10, the learning rate is set to 0.01, momentum=0.9 and weight\_decay=0.0005. The images are randomly cropped and horizontally flipped and normalized along the input channels (as two tuple arguments $(0.4914, 0.4822, 0.4465), (0.2023, 0.1994, 0.2010)$ in the \texttt{transforms.Normalize} method in the \texttt{torchvision} library). The batch size is 128 for training. 

\textbf{CIFAR-100} For CIFAR-100 a dataset with 50k training images (500 samples per 100 classes) and 10k test images, the learning rate is set to 0.05 with a decay rate 0.1 at every 25 epochs after 100 epochs until the last 200 epoch. The batch size is set to 64. 

\textbf{Tiny-ImageNet-200} For Tiny-ImageNet-200, we train for 100 epochs and decay the learning rate every 20 epochs. The batch size is also set to 64. The student is trained by a combination of cross-entropy classification objective and a KD objective as $\ell = \ell_{\mathrm{CE}} + \alpha\ell_{KD}$.

\paragraph{Baseline KD Settings}
The abbreviations refer to correspond to the KD method names listed in \autoref{tab:kd_types}. 

The main KD influence factor $\alpha$ is set based on either the original paper settings or a set using a grid search over few settings close to the original paper settings. For our proposed Pearson Correlation (PC) and Centered Kernel Alignment (CKA) KD objectives, we grid search over $\alpha \in [0.2, 0.5, 0.65, 0.8, 0.9]$ and $\zeta \in [0.1, 0.2, 0.5,  0.7, 0.9]$ for those objectives that use a margin-based triplet loss (e.g Triplet CKA). The parameter settings specified in the paper of previous KD methods is used and where not specified we manually grid search different settings of $\alpha$.

\begin{enumerate}[topsep=0pt,itemsep=-0.5ex,partopsep=1ex,parsep=1ex]
\item Fitnets~\cite{romero2014fitnets}: $\alpha = 80$
\item AT~\cite{zagoruyko2016paying}: $\alpha = 600$
\item SP~\cite{tung2019similarity}: $\alpha = 2000$
\item CC~\cite{peng2019correlation}: $\alpha = 0.05$
\item VID~\cite{ahn2019variational}: $\alpha = 0.8$
\item RKD~\cite{park2019relational}: $\alpha_1 = 25$ for distance and $\alpha_2 = 50$ for angle. For this loss, we combine both term following the original paper.
\item PKT Probabilistic Knowledge Transfer~\cite{passalis2018learning}
~\cite{passalis2018learning}: $\alpha = 10000$
\item AB~\cite{heo2019knowledge}: $\alpha = 0.2$, distillation happens in a separate pre-training stage where only distillation objective applies.
\item FT~\cite{kim2018paraphrasing}: $\alpha = 500$
\item FSP~\cite{yim2017gift}: $\alpha = 0$, distillation happens in a separate pre-training stage where
only distillation objective applies.
\item NST~\cite{huang2017like}: $\alpha = 50$
\item CRD Contrastive Representation Distillation~\cite{tian2019contrastive}: $\alpha = 0.8$, in general $\alpha \in [0.5, 1.5]$ works well.
\item Kullbeck-Leibler Divergence Distillation (KLD)~\cite{hinton2015distilling}: $\alpha = 0.9$ and $\tau = 4$.
\end{enumerate}

\paragraph{Our Proposed KD Baseline Method Settings}

\begin{enumerate}[topsep=0pt,itemsep=-0.5ex,partopsep=1ex,parsep=1ex]
\item Siamese-PC: This is the loss from \autoref{eq:pc_loss}. $\alpha = 0.8$ 
\item Triplet CKA (Linear) or referred to as Linear-CKA: This is the loss from \autoref{eq:triplet_cka} loss with a linear kernel - $\alpha = 0.8$ and $\zeta = 0.2$
\item Triplet CKA (RBF) or referred to as Linear-RBF: $\alpha = 0.8$ and $\zeta = 0.2$
\item Contrastive-CKA described in \autoref{eq:triplet_cka}: $\alpha = 0.15$
\item Contrastive-PC: This is the loss in \autoref{eq:pc_loss} applied to both the positive pair embeddings and negative sample pair embeddings. 
\end{enumerate}

\paragraph{Our Proposed SCNS-Based KD Settings}

\begin{enumerate}
\item InfoNCE Loss with Instance-level SCNS: $\alpha = 0.9$
\item InfoNCE-LM: This is the InfoNCE loss between latent mixup representations as in Equation 9. 
\item InfoNCE+Class SCNS:  This represents SCNS with an InfoNCE loss. $\alpha = 0.65$
\item InfocNCE-LM + Class-SCNS: This represents SCNS with an InfoNCE loss with an second InfoNCE loss for latent mixup representations. $\alpha = 0.65$

\end{enumerate}

\paragraph{Preprocessing Details}
For experiments with the CKA objective we group mini-batches by their targets as CKA operates on cross-correlations between samples of the same class. Therefore, random shuffling is carried out on the mini-batch level but not on the instance level. For all other objectives, standard random shuffling of the training data is performed. 
\subsection{Metric Learning Distillation Objectives}
In our work we also propose two correlation and kernel-based loss functions that can be used for both standard pointwise-based KD and metric-learned KD. These are used as alternatives from those described in the related research, which we describe below. 

\paragraph{Metric-based Centered Kernel Alignment}
The CKA function measures the closeness of two set of points that can be represented as matrices. Thus far it has only been used for analysing representation similarity in neural networks~\cite{kornblith2019similarity} but not for optimizing a neural network. We propose to distil the knowledge of the teacher network by minimizing the alignment between student and teacher representations using CKA as a baseline.

For two arbitrary matrices $\mat{Z}_i \in \mathbb{R}^{M \times d_L}$ and $\vec{Z}_j \in \mathbb{R}^{M \times d_L}$, each consisting of a set of neural network representations, the centered alignment (CA) can be expressed as, 

\begin{equation}
    \mathrm{CA}(\mat{Z}_{i}, \mat{Z}_{j}) = \frac{\langle \mathrm{vec}(\mat{Z}_{i}\mat{Z}_i^{\top}), \mathrm{vec}(\mat{Z}_{j}\mat{Z}_j^{\top}) \rangle}{ ||\mat{Z}_{i}\mat{Z}_i^{\top}||_F ||\mat{Z}_{j}\mat{Z}_j^{\top}||_F}
\end{equation}

where $||\cdot||_F$ is the Frobenius norm. We can replace the  the dot product in numerator with a kernel function $K(\cdot,\cdot)$ to compute the CKA. The kernel function is smooth and differentiable, hence we use it as a loss function for KD-based metric learning to maximize the similarity between the positive class latent representations given by $(z_{+}^{\mathcal{S}}, z_{+}^{\mathcal{T}})$ and negative class latent representations $(z_{-}^{\mathcal{S}}, z_{-}^{\mathcal{T}})$.~\autoref{eq:cka_def} shows the formulation of CKA where a kernel is used instead of the dot product.

\begin{equation}\label{eq:cka_def}
\begin{split}
    \mathrm{CKA}(\mat{Z}_i, \mat{Z}_j) = & \mathrm{K}(\mat{Z}_i, \mat{Z}_j) - \mathbb{E}_{\mat{Z}}[\mathrm{K}(\mat{Z}_i, \mat{Z}_j)] - \\ 
      \mathbb{E}_{\mat{Z}_j}[\mathrm{K}(\mat{Z}_i, \mat{Z}_j)] & + \mathbb{E}_{\mat{Z}_i, \mat{Z}_j}[\mathrm{K}(\mat{Z}_i, \mat{Z}_j)]
\end{split}
\end{equation}

In our experiments, a linear kernel and a radial basis function ($K(\mat{Z}_i, \mat{Z}_j) = \exp(- ||\mathrm{vec}(\mat{Z}_i) - \mathrm{vec}(\mat{Z}_j)||^{2}_F/2S^{2})$) were used where $S^2$ is the sample variance. To account for intra-variations and inter-variations between between student and teacher representations the
CKA loss is used as apart of a triplet loss that maximizes the kernel similarity between the positive pair of the student anchor and student positive sample and also the student anchor with the teacher anchor. This is shown as $\ell_{\mathrm{CKA}}^{+}$ in \autoref{eq:triplet_cka}, where $z_{*}$ represents the anchor sample. The same is computed for the negative pair, denoted by $\ell_{\mathrm{CKA}}^{-}$. Both losses are combined as one where $\zeta$ controls the tradeoff between positive pair losses and negative pair losses and $m$ is the margin.  


\begin{equation}\label{eq:triplet_cka}
\begin{split}
    \ell^{+}_{\mathrm{CKA}} = & \mathrm{CKA}(\vec{z}^S_{+}, \vec{z}^S_{*}) + \mathrm{CKA}(\vec{z}^S_{*}, \vec{z}^T_{*}), \\
     \ell^{-}_{\mathrm{CKA}} = & \mathrm{CKA}(\vec{z}^S_{-}, \vec{z}^S_{*}) + \mathrm{CKA}(\vec{z}^S_{-}, \vec{z}^T_{*}), \\
     \ell_{\mathrm{CKA}} = & \max \big(0, \zeta \ell^{+}_{\mathrm{CKA}} -  
    (1 -\zeta) \ell^{-}_{\mathrm{CKA}} + m\big) 
\end{split}
\end{equation}

Concretely, this will force all positive class representations to have high a CKA score within and across the samples for the student and teacher representations, and similarly for the negative pair of the triplet.

\paragraph{Pearson Correlation Representation Distillation}
An alternative to maximizing the mutual information between $z^{\mathcal{S}}$ and $z^{\mathcal{T}}$~\cite{belghazi2018mutual} is instead to maximize the linear interactions using a PC-based loss as a strong baseline. The objective to be maximized is expressed as \autoref{eq:pc_loss}

\begin{equation}\label{eq:pc_loss}
\begin{split}
\ell_{\mathrm{PC}}^{+} = &
\rho_{\mathrm{PC}}(\vec{z}^{\mathcal{S}}_{-}, \vec{z}^{\mathcal{T}}_{-}) + \rho_{\mathrm{PC}}(\vec{z}^{\mathcal{S}}_{+}, \vec{z}^{\mathcal{T}}_{+}),  \\
\ell_{\mathrm{PC}}^{-} = & \rho_{\mathrm{PC}}(\vec{z}^S_{-}, \vec{z}^S_{+}) + \rho_{\mathrm{PC}}(\vec{z}^S_{-}, \vec{z}^T_{+}), \\
\ell_{\mathrm{PC}} = & \max \big(0, \zeta \ell^{+}_{\mathrm{PC}} -  
(1 -\zeta) \ell^{-}_{\mathrm{PC}} + m\big) 
\end{split}
\end{equation}

where $\rho_{\mathrm{PC}} \in [-1, 1]$ computes the correlation coefficient. When using the $\ell_{\mathrm{PC}}$ loss with contrastive learning (`Contrastive-PC') we take the average loss as $\frac{1}{N-1}\sum_{i=1}^{N-1}\rho_{\mathrm{PC}}(z^{\mathcal{S}}_{-, i}, z_{+}^{\mathcal{S}})$ and similarly for the remaining losses that use negative sample representations.

\section{Connection Between Mutual Information \& Conditional Negative Sampling}\label{sec:mi_and_scns}
In this section we describe contrastive learning with our proposed conditional negative sampling in terms of mutual information (MI). 
Let $p(y)$ be the probability of observing the class label $y$ and $p(x, y)$ denote the probability density function of the corresponding joint distribution. Then, the MI is defined as \autoref{eq:mi} 

\begin{equation}\label{eq:mi}
I(X ; Y) = \sum_y \int_x p(x, y) \log \frac{p(x,y)}{ p(x)p(y)} dx
\end{equation}

and can be further expressed in terms of the entropy $\mathbb{H}(X)$ and conditional entropy $\mathbb{H}(X|Y)$ as shown in \autoref{eq:mi_entropy}.
\begin{equation}\label{eq:mi_entropy}
\begin{gathered}
    I(X; Y) =  \sum_y \int_x p(x, y) \log \frac{p(x|y)}{ p(x)} dx  \\ 
= - \int_x p(x) \log p(x) - ( - \int_x \sum_y \log p(x, y) p(x| y)) \\
= \mathbb{H}(X) - \mathbb{H}(X|Y)
\end{gathered}    
\end{equation}

Then $I(X;Y)$ can be formulated as the KL divergence between $p(x, y)$ and the product of marginals $p(x)$ and $p(y)$,

\begin{equation}\label{eq:kl_mi}
\begin{gathered}
I(X; Y) \text{=} D_{\tiny{KL}} \big(p(x, y) || p(x) p(y)\big) \text{=} 
\underset{p(x, y)}{\mathbb{E}}\Big[\frac{p(x, y)}{p(x) p(y)}\Big]
\end{gathered}
\end{equation}

 Hence, if the classifier can accurately distinguish between samples drawn from the joint $p(x, y)$ and those drawn from the product of marginals $p(x)p(y)$, then $X$ and $Y$ have a high MI. However, estimating MI between high-dimensional continuous variables is difficult and therefore easier to approximate by maximizing a lower bound on MI. This is known as the InfoMax principle~\cite{linsker1988application}. In the below subsections, we describe how this MI lower bound is maximized using the InfoNCE loss~\cite{oord2018representation}.

\subsection{Estimating Mutual Information with InfoNCE}
The InfoNCE loss maximizes the MI between $\vec{z}_i$ and $\vec{z}_j$ (which is bounded by the MI between $\vec{z}_i$ and $\vec{z}_j$). The optimal value for $f(\vec{z}_j, \vec{z}_i)$ is given by $p(\vec{z}_j|\vec{z}_i)/p(\vec{z}_j)$. Inserting this back into Equation 4 and splitting $X$ into the positive sample and the negative examples $X_{-}$ results in:

\begin{equation}\label{eq:mi_lower_bound_ext}
\begin{gathered}
    \ell = - \underset{X}{\mathbb{E}} \log \Bigg[ \frac{\frac{p(x_{*}|x_i)}{p(x_{*})}}{\frac{p(x_{*}|x_{+})}{p(x_{*})} + \sum_{{x_i} \in X_{-}}\frac{p(x_i|x_{+})}{p(x_i)}} \Bigg] \\ 
    =\underset{X}{\mathbb{E}}\log \Bigg[1 + \frac{p(x_{*})}{p(x_{*}|x_{+})} + \sum_{x_i \in X_{-}} \frac{p(x_i|x_{+})}{p(x_i)}\Bigg]\\
    \approx\underset{X}{\mathbb{E}}\log \Bigg[1 + \frac{p(x_{*})}{p(x_{*}|x_{+})} (M-1) \underset{x_i}{\mathbb{E}}\frac{p(x_i|x_{+})}{p(x_i)}\Bigg] \\
    = \underset{X}{\mathbb{E}}\log \Big[1 + \frac{p(x_{*})}{p(x_{*}|x_{+})} (M-1) \Big] \\
    \geq \underset{X}{\mathbb{E}}\log  \Big[\frac{p(x_{*})}{p(x_{*}|x_{+})} M\Big] 
    = - I(x_{*}, x_{+}) + \log(M) 
\end{gathered}    
\end{equation}

Therefore, $I(z_j, z_i) \geq \log(N) - \ell$ which holds for any $f$, where higher $\ell$ leads to a looser MI bound. This MI bound becomes tighter as the number of negative sample pairs $M$ increases and in turn is likely to reduce $\ell$. In our work we argue that the $\log(M)$ term be replaced with a term that accounts for the geometry of the embedding space as not all negative sample pairs are equally important for reducing $\ell$. Therefore, the next subsection describes \emph{our} formulation of the MI bound that incorporates the notion of semantic similarity between embeddings of the sample pairs in order to choose an informative $M$ samples to tighten the bound.

\begin{figure}
    \centering
    \includegraphics[scale=0.4]{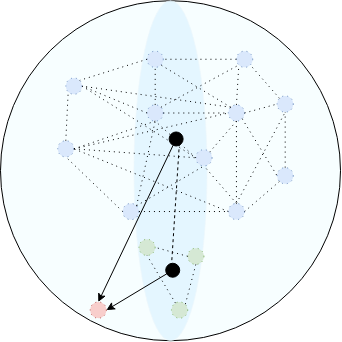}
    \caption{\small{$k$-NN Conditional Negative Sampling for embeddings on the $L_2$ sphere retrieved from the lookup table~\cite{xiao2017joint}. Blue circles are \textit{easy} negative sample embeddings, green circles are the top-$k$ most semantically similar embeddings for the target sample in green and black circles represent embedding centroids computed using a weighted average of embedding where the weights are the alignment score. }}\label{fig:scns_sphere}
\end{figure}

\subsection{Mutual Information Lower Bounds for Semantically Conditioned Negative Sampling}

We formalize the connection between SCNS and maximizing MI between representations in the contrastive learning and formulate an expression that describes how SCNS tightens the lower bound estimate on MI. We begin by defining `closeness' between representations $\{z_{*}, z_{+}, z_{-}\} \in \mathcal{Z}$ of samples $\{x_{*}, x_{+}, x_{-}\} \in \mathcal{X}$. Given that we are not restricted to a distance, divergence or similarity between vectors, we refer to a general measure as an \textit{alignment} function $A$ that outputs an alignment score $a \in [0, 1]$.

Given an anchor sample $\vec{x}_{*}$, the expected alignment score for the top-$k$ negative samples is $a^{k}_{x_{-}} := \mathbb{E}_{x_{-} \sim D^{k}_x} [A(z_{*}, z_{-})]$ and for the negative samples outside of the top-$k$ samples it is $a^{r}_{x_{-}} := \mathbb{E}_{x_{-} \sim D^{r}_x } [A(z_{*}, z_{-})]$ where $ D^{r}_{x_{*}} \subseteq D, D^{k}_{x_{*}}  \not\in D^r_{x_{*}}$, $r = N - N_y - k$. and $N_y$ is the number of samples of class $y$.

From the above, $\alpha \approx 1$ corresponds to negative samples that lie very close to $x_{*}$. We can then use the alignment weight (AW) $\Omega_{x_{*}}:= 1 - a^{k}_{x_{*}}/(a^{k}_{x_{*}} + a^{r}_{x_{*}})$ to represent the difference in `closeness' between the top-$k$ negative samples and the remaining negative samples. This is visualized as the difference in alignment between the centroids in the embedding space shown in \autoref{fig:scns_sphere}. 
 
 We can then replace $M$ negative samples as $\Omega := \sum_{i=1}^M \Omega_{x_i}$ and substitute $\log (M)$ with $\log (2\Omega)$ in \autoref{eq:mi_lower_bound_ext}. When the centroids of both negative samples sets are close (i.e $a^{k}_{x_{*}} = a^{r}_{x_{*}}$) then $\Omega = M/2$. Hence $\log(2\Omega) \approx \log(M)$ when the negative samples are centered in the same region, in which case uniform sampling provides the same guarantees as SCNS. The new MI lower bound is then \autoref{eq:new_mi_lb_ext}.

 \begin{equation}\label{eq:new_mi_lb_ext}
     I(X, Y) \geq \ell + \log(2\Omega)
 \end{equation}
 
 Intuitively, this bound favors top-$k$ negative samples that are close to the positive class boundary but are also relatively close compared to the remaining negative samples. This is dependent on how the embedding space is constructed from the pretrained network and which $A$ is chosen. Hence, this is clearly an estimation on the true MI lower bound. 
 
 We can now define the relation between $M$ uniformly sampled negatives (USNs) and top-$k$ SCNSs. Let $\Omega_{D_{x}^{\mathcal{U}}}$ be the AW of USNs for $x_{*}$ and $\Omega_{D^{k}_{x_{*}}}$ is the AW for the top-$k$ negative samples. When $|D_{x_{*}}^{\mathcal{U}}| \approx |D^{k}_{x_{*}}|$ (i.e both negative sample sets lie on a ring on the hypersphere around $x_{*}$),

\begin{equation}
    I(X, Y) \leq \ell + \log(2\Omega_{D^k_x}) \leq \ell + \log(2\Omega_{\mathcal{U}})
\end{equation}

given the non-uniform prior over negative samples as defined in SCNS. For some $k \ll N$, $2(\Omega_{k} - \Omega_{\mathcal{U}})= 0$ is met when a subset of $D^{\mathcal{U}}_{x}$ negative samples have $\bar{z}^{\mathcal{U}}_{\tilde{x}} \approx \bar{z}^{k}_{x}$ where the subscript $\tilde{x}$ denotes the aforementioned subset and $\bar{z}^{k}_{x}$ is the centroid of the top-$k$ negative samples

\section{Uniform vs SCNS Sample Complexity}\label{sec:scns_complexity}

In this section, we aim to identify the relationship between uniform sampling and SCNS by formulating how many draws are required to cover the the top-$k$ samples. We begin by the simpler case of single draws with a uniform distribution and extend it to batches of negative samples of size $b$ are drawn uniformly. We then repeat this with draws of unequal probability in \autoref{sec:ccp_unequal_prob}, as is the case for SCNS.  



\subsection{Number of Samples Until Observing Top-$k$ SCNS Samples Under a Uniform Distribution}\label{sec:ccp_single_ext}
We now describe the expected number of i.i.d drawn negative samples from $M$ to observe the top-$k$ samples at least once for a given $x$. Let $N_i$ denote the number of negative samples observed until you see the $i$-th new negative sample among the top-k samples and $N$ is the number of samples until all top-$k$ negative samples are observed. Since $N=\sum^{k}_{i=1} N_i$, 
\begin{equation}
\mathbb{E}[N] = \mathbb{E}\big[\sum_{i=1}^{k} N_i\big] = \sum_{i=1}^{k}\mathbb{E}\big[N_i\big]    
\end{equation}
where $N_i$ follows a geometric distribution with parameter $(k+1-i)/M$. Therefore $\mathbb{E}[N_i] = \frac{M}{k + 1 - i}$ and $\mathbb{E}[N]= M \sum_{i=1}^k (k+1-i)^{-1} = M\sum_{i=1}^k i^{-1}$.
\subsubsection{Number Of Batches To Cover All Samples}\label{sec:ccp_batch}

In \autoref{sec:ccp_single_ext}, we formulate the number of uniform negative samples required to cover the top-k negative samples at least once for a single consecutive draws. However, in practice mini-batch training is carried out and therefore it is necessary to reformulate
this for consecutive mini-batch draws of size $b$ for a given $x$ with replacement. This is a special case of the Coupon Collector's Problem. 

\begin{equation}
\begin{gathered}
    \sum_{j=0}^{\infty} P(K > ib) = \sum_{i=0}^{\infty} \big(1 - \frac{M!}{M^{ib}}\bracenom{ib}{M}  \big) \\
  = \sum_{i=0}^{\infty} \big(1 - \frac{1}{M^{ib}}\sum_{i=0}^{M} (-1)^{M-j}\binom{M}{j} l^{ib} \Big) \\
  = \sum_{j=0}^{\infty}\sum_{j=0}^{M-1}(-1)^{M-j+1}\binom{M}{j}\binom{j}{M}^{ib} \\
  = \sum_{j=0}^{M-1}(-1)^{M-j+1}\binom{M}{j}\frac{1}{1 - (\frac{j}{M})^{b}}
\end{gathered}
\end{equation}

Hence, as $M$ grows more mini-batch updates are needed until the top-$k$ \textit{hard} negative samples are observed. Thus, $b$ is required to be larger which is typically required in the MI formulation of NCE. To define the difference between USNs and SCNS we also need to define the CCP for unequal probabilities as defined by $\mat{P}_{x_{*}}$ for $x_{*}$. We first make a distributional assumption. Here, we assume that the distances (or \textit{alignment}) for $x$ to its $N$ negative samples follows a power law distribution. This is well-established for text~\cite{bollegala2010relational,piantadosi2014zipf} and we also observe a power law trend when computing the cosine similarities between all pairs with $f^{\mathcal{T}}$. 

\subsection{Number of Samples Until Observing Top-$k$ SCNS Samples Under an SCNS Distribution}\label{sec:ccp_unequal_prob}

In this subsection, we formulate the expected number of negative samples required for non-uniform sampling distributions, namely our proposed SCNS distribution provided by the teacher network. 

\paragraph{Maximum-Minimum Identity Approach}
The number of draws required to observe all top-$k$ NS is $C= \max\{C_1,\ldots,C_N\}$ where $N_i$ has a conditional probability $p_i$ of being sampled as defined in SCNS. Since the minimum of $N_i$ and $N_j$ is the number of negative samples needed to obtain either the $i$-th top-$k$ sample or the $j$-th top-$k$ sample, it follows that for $i \neq j$, $\min(N_i,N_j)$ has probability $p_i + p_j$ and the same is true for the minimum of any finite number of these random variables. The Maximum-Minimums Identity~\cite{ross2014first} is then used to compute the expected number of draws:

\begin{equation}
\begin{gathered}
    \mathbb{E}[N] = \mathbb{E}[\underset{i=1,\ldots M}{\max} N_i] =
    \sum_{i} \mathbb{E}[N_i] -
    \sum_{i < j} \mathbb{E}[\min(N_i, N_j)] \\
    +  \sum_{i < j < k} \mathbb{E}[\min(N_i, N_j, N_k)] - \ldots \\ 
    + (-1)^{M + 1} \mathbb{E}[\min(N_1, N_2, \ldots, N_M)] \\
\end{gathered}    
\end{equation}
We can then express the above in terms of the individual probabilities associated with drawing $M$ negative samples conditioned on a given $x_{*}$ as,
\begin{equation}
\begin{gathered}
\mathbb{E}[N] = \sum_i \frac{1}{p_i} - \sum_{i < j} \frac{1}{p_i + p_j} + \\
\sum_{i < j< k}\frac{1}{p_i + p_j + p_k} + (-1)^{M+1}\frac{1}{p_1 + \ldots + p_M}
\end{gathered}
\end{equation}

Since $\int^{\infty}_{0} e^{-px}dx = \frac{e^{-px}}{p}\Big|^{x=+\infty}_{x=0} = \frac{1}{p}$, integrating gives

\begin{equation}
\begin{gathered}
    1 - \prod_{i=1}^{N}(1 - e^{-p_i x}) =
    \sum_i e^{-p_i x} \\ = \sum_{i < j} e^{-(p_i + p_j)x} + \ldots +
    (-1)^{N + 1}e^{-(p_1 + \ldots + p_N)x}
\end{gathered}
\end{equation}

Hence, we get a concise equivalent expression~\cite{flajolet1992birthday}:
\begin{equation}
    \mathbb{E}[X_{-}] = \int^{+\infty}_{0}\Big(1 - \prod_{i=1}^{N}(1 - e^{-p_i x})\Big)dx
\end{equation}
The probability of sampling the $i$-th top-$k$ negative sample is $p_i \geq 0$ such that $p_1 + \ldots + p_N = 1$. To determine $\mathbb{E}[N]$, we first assume that the number of negative samples to draw $t$ as $X_{-}(t)$, follows a Poisson distribution with parameter $\lambda= 1$. Let $\mathbb{I}_i$ be the inter-arrival time between the $(i-1)$-th and the $i$-th negative sample draw: $\mathbb{I}_i$ has exponential distribution with parameter $\lambda = 1$. Let $Z_i$ be the time in which the $i$-negative sample arrives for the \emph{first} time (hence $Z_i \sim \exp(p_i)$) and let $Z = \max\{Z_1,\ldots,Z_N\}$ be the time in which we have observed all samples at least once.
Note that $Z=\sum^N_{i=0}\mathbb{I}_i$ and $\mathbb{E}[X] =\mathbb{E}[Z]$, indeed:

\begin{equation}
\begin{gathered}
\mathbb{E}[Z] = \mathbb{E}[\mathbb{E}[Z|N]] = \sum_k \mathbb{E}\Big[\sum_{i=1}^{k}\mathbb{I}_i|N = k\Big] P(N=k) \\
= \sum_k \Big[\sum_{i=1}^{k}\mathbb{I}_i\Big] \mathbb{P}(X=k) = \sum_{k}\sum_{i=1}^{k} \mathbb{E}[\mathbb{I}_i]\mathbb{P}(N = k) \\
\sum_k k \mathbb{P}(N=k) = \mathbb{E}(N)
\end{gathered}
\end{equation}

It follows that it suffices to calculate $\mathbb{E}[Z]$ to get $\mathbb{E}[N]$. Since $Z= \max\{Z_1,..,Z_N\}$, we have $
    F_{Z}(t) = \mathbb{P}(Z \leq t) = \prod_{i=1}^{N} F_{Z_i}(t) = \prod_{i=1}^{N}(1 - e^{-p_i t})
$ and then 
\begin{equation}
    \mathbb{E}[Z] = \sum_{0}^{+\infty} \mathbb{P}(Z > t)dt = \sum_{0}^{+ \infty} \Big(1 - \prod_{i=1}^{N}(1 - e^{-p_i t}) \Big) dt
\end{equation}

From the above expression, we clearly see that when $p_i$ is defined by a non-uniform distribution, the number of draws is proportionally larger in $N$. However, our original goal is to only sample from the most probably top-$k$ samples, in which case $\mathbb{E}[Z]$ is lower.

\section{Additional Results}

\autoref{fig:tiny_imagenet_boxplot} is a boxplot of how the performance changes for different KD methods as the student-teacher capacity gap varies. The purpose of this is to identify how much the performance increases are due to larger capacity as opposed to the particular KD method used. 

\begin{figure}[ht]
    \centering
    \includegraphics[scale=0.36]{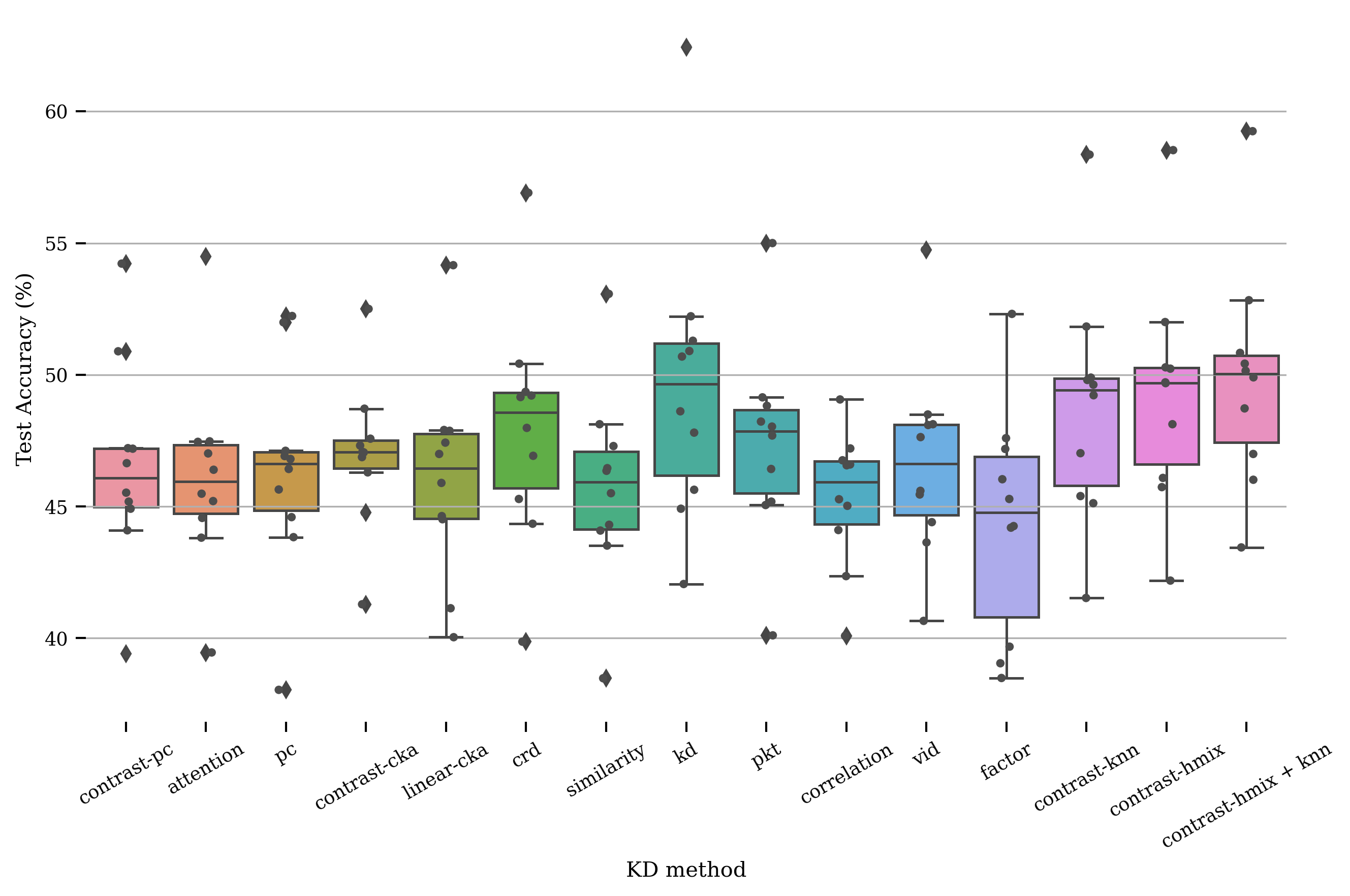}
    \caption{Tiny-ImageNet Boxplot Test Accuracy for Knowledge Distillation Approaches}
    \label{fig:tiny_imagenet_boxplot}
\end{figure}

\autoref{fig:cifar100_results}  and \autoref{fig:tiny_imagenet_results} (below the references section) show the KD results with the unscaled (i.e no [0, 1] normalization) color codings and \autoref{fig:cifar100_norm_results} and \autoref{fig:tiny_imagenet_norm_results} shows the corresponding [0, 1] row-normalized results to highlight the relative differences between each KD method. We note the last `Average Score' row displays the average performance over all student-teacher architecture pairs for each KD method.

\begin{figure*}[ht]
\centering
\begin{minipage}{1.\textwidth}
    \centering
    \includegraphics[scale=0.5]{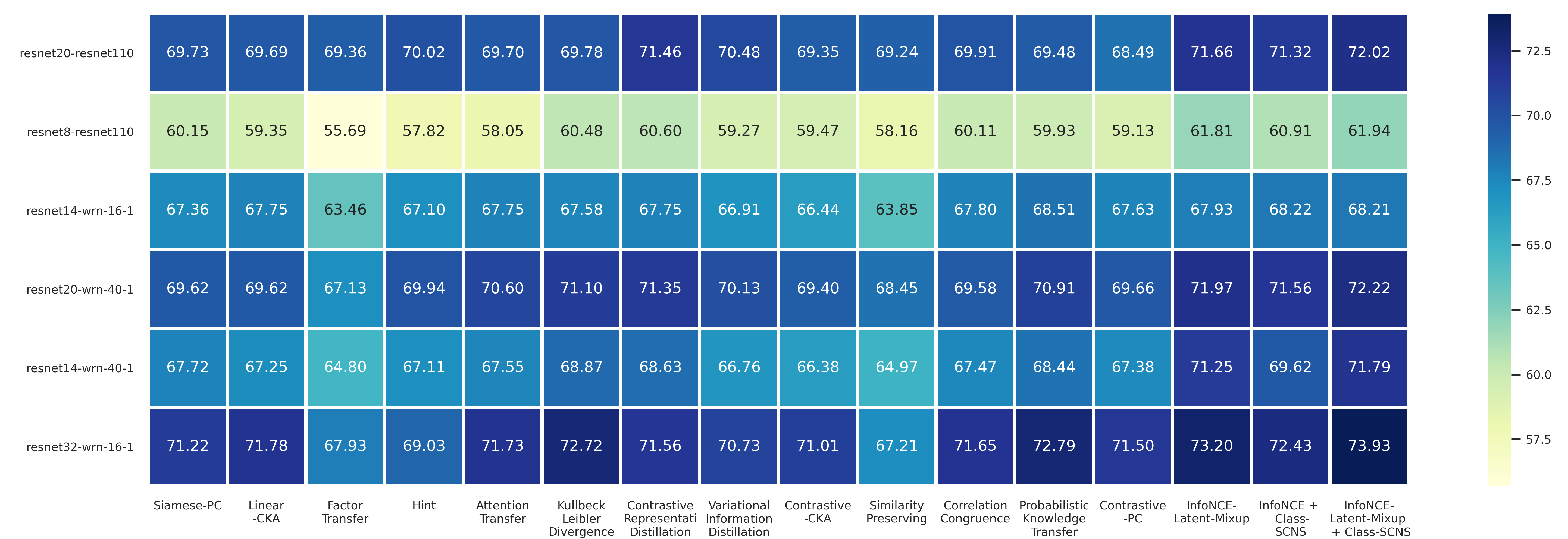}
    \caption{CIFAR-100 Test Accuracy for KD Approaches}
    \label{fig:cifar100_results}
\end{minipage}
\begin{minipage}{1.\textwidth}
    \centering
    \includegraphics[scale=0.54]{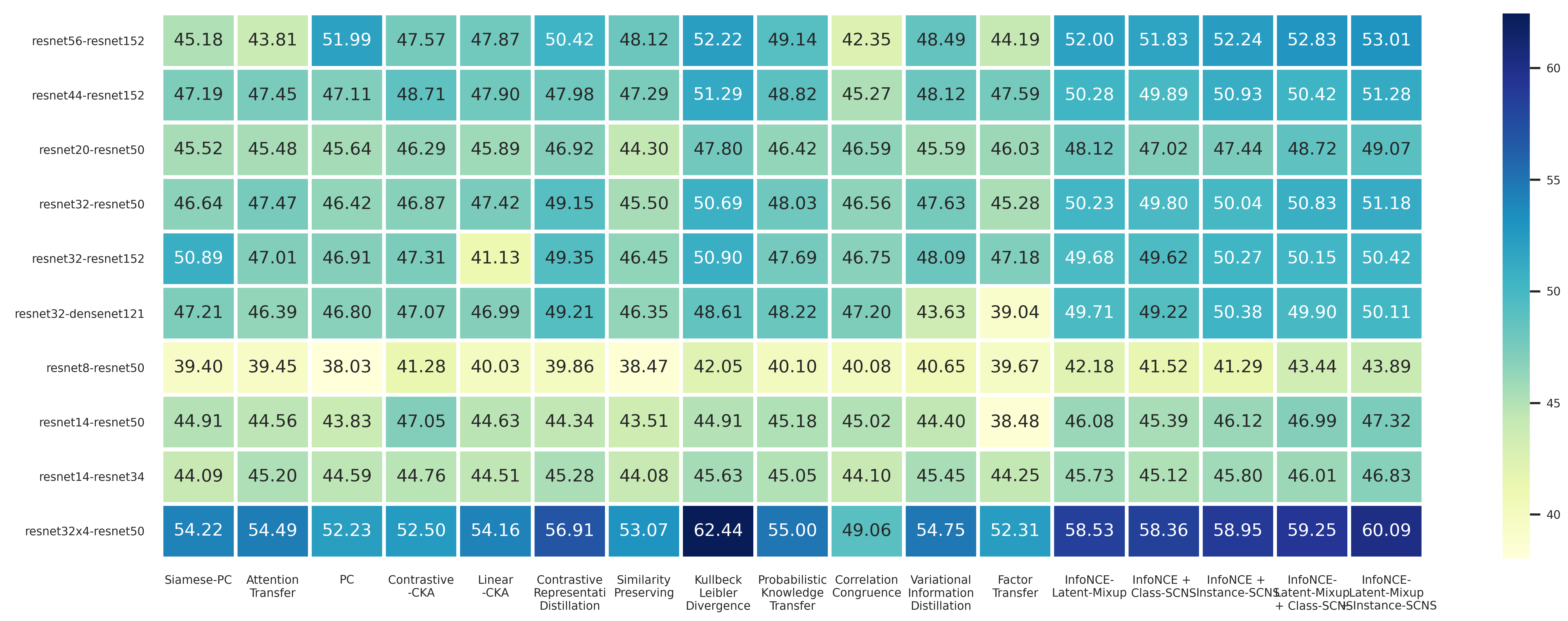}
    \caption{Tiny-Imagenet 200 Test Accuracy for KD Approaches}
    \label{fig:tiny_imagenet_results}
\end{minipage}
\end{figure*}

\begin{figure*}[ht]
\centering
\begin{minipage}{1.\textwidth}
    \centering
    \includegraphics[scale=0.55]{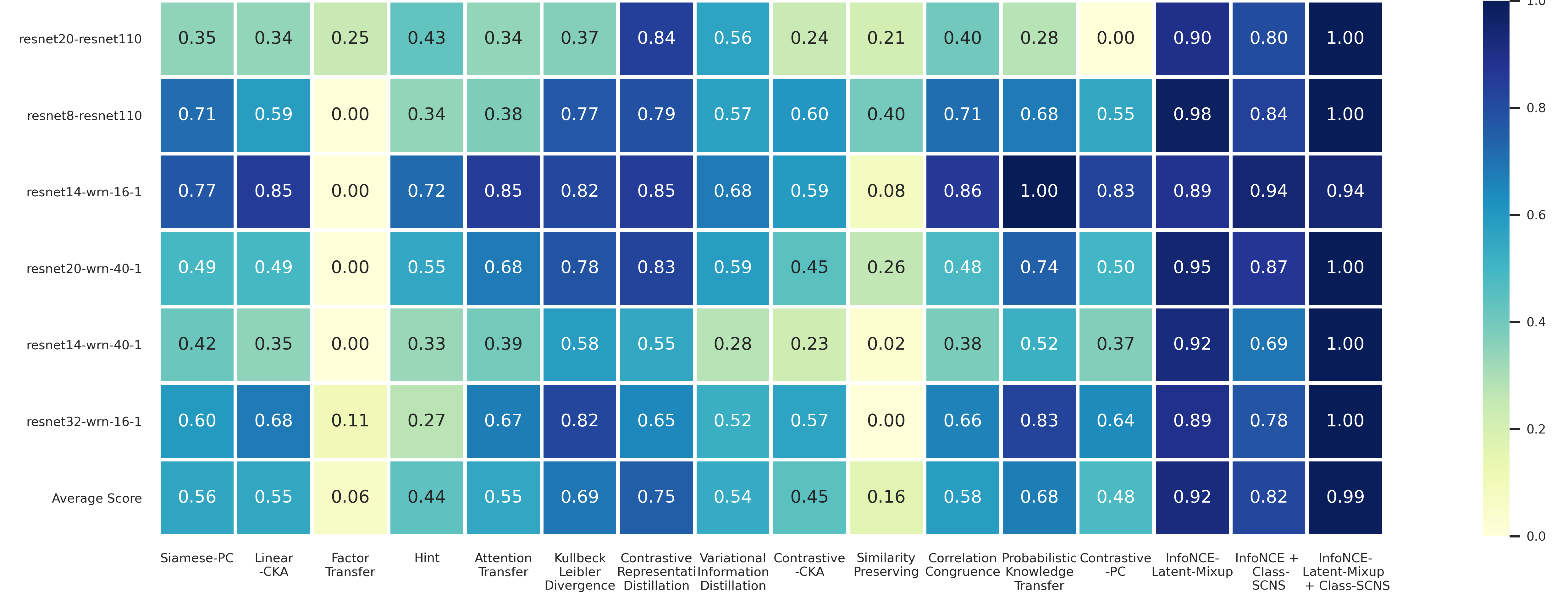}
    \caption{CIFAR-100 Normalized Test Accuracy for Knowledge Distillation Approaches}
    \label{fig:cifar100_norm_results}
\end{minipage}
\begin{minipage}{1.\textwidth}
    \centering
    \includegraphics[scale=0.47]{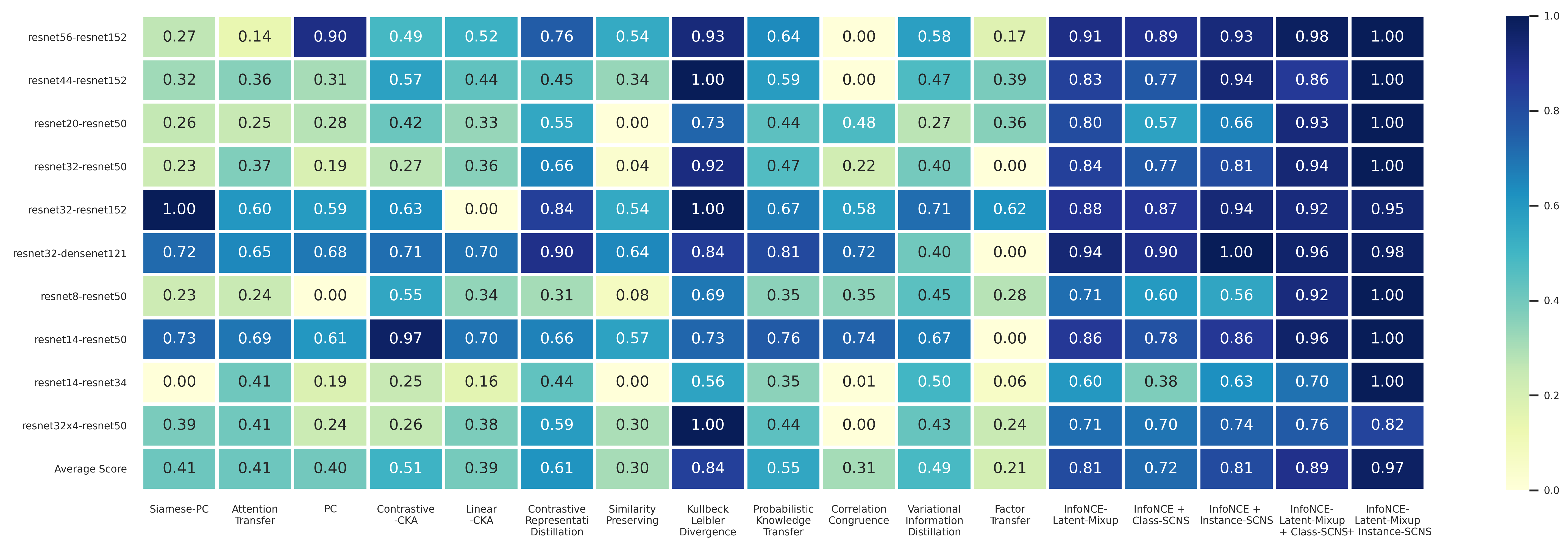}
    \caption{Tiny-Imagenet 200 Test Accuracy for Knowledge Distillation Approaches}
    \label{fig:tiny_imagenet_norm_results}
\end{minipage}
\end{figure*}

\Cref{fig:ti_corr_plot_1,fig:ti_corr_plot_2,fig:ti_corr_plot_3,fig:ti_corr_plot_4,fig:ti_corr_plot_5,fig:ti_corr_plot_6} shows the Tiny-ImageNet-200 embedding similarity of classes and \Cref{fig:corr_plot_1,fig:corr_plot_2,fig:corr_plot_3,fig:corr_plot_4,fig:corr_plot_5} shows the embedding similarity for CIFAR-100. 

\begin{figure*}[ht]
\centering    
\subfigure[Correlation Plot 1]{\label{fig:ti_corr_plot_1}\includegraphics[width=90mm]{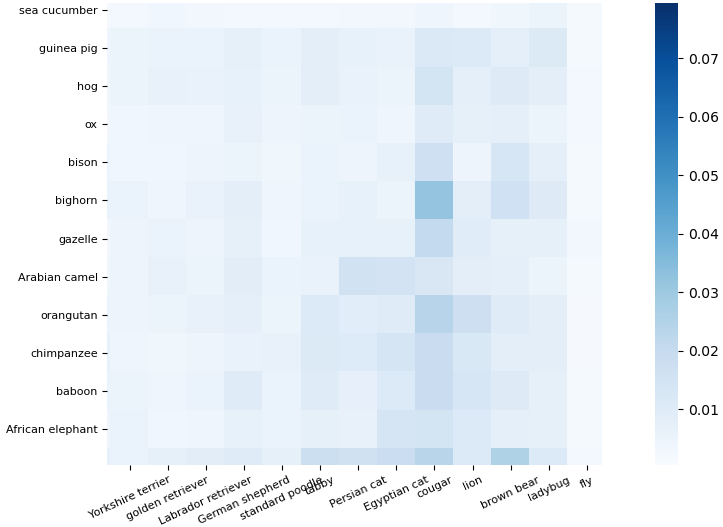}}%
\subfigure[Correlation Plot 2]{\label{fig:ti_corr_plot_2}\includegraphics[width=80mm]{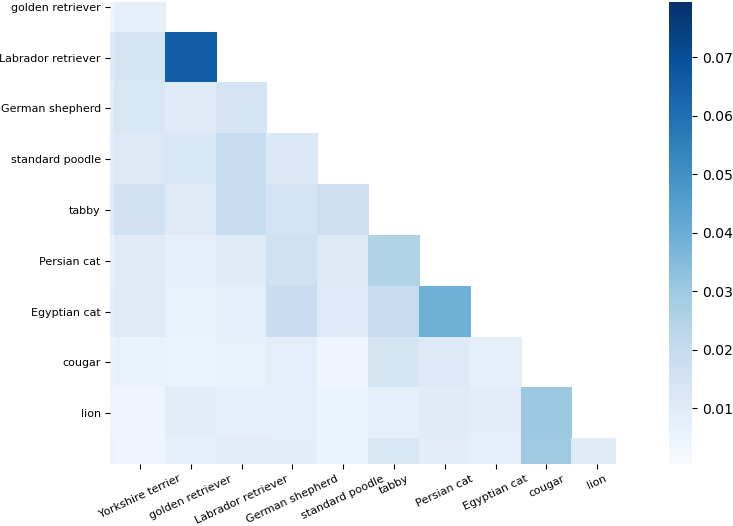}}
\subfigure[Correlation Plot 3]{\label{fig:ti_corr_plot_3}\includegraphics[width=90mm]{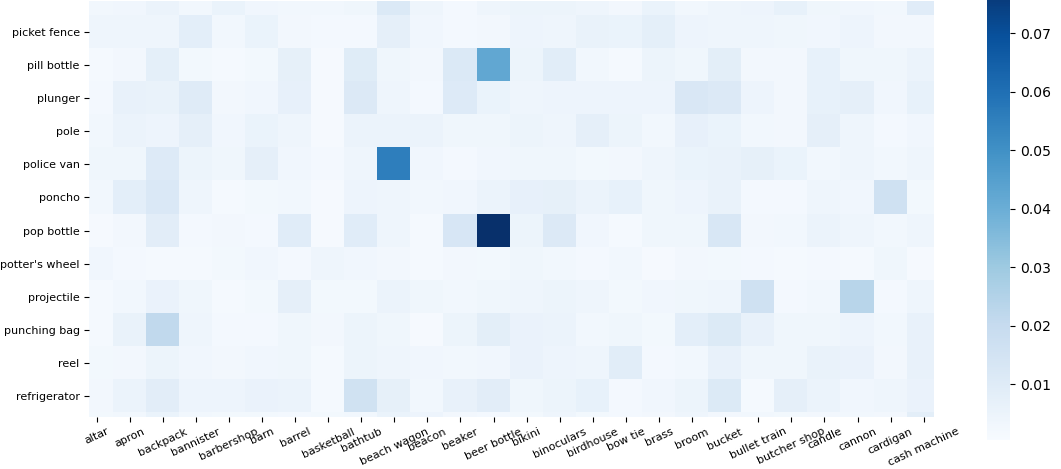}}%
\subfigure[Correlation Plot 4]{\label{fig:ti_corr_plot_4}\includegraphics[width=70mm]{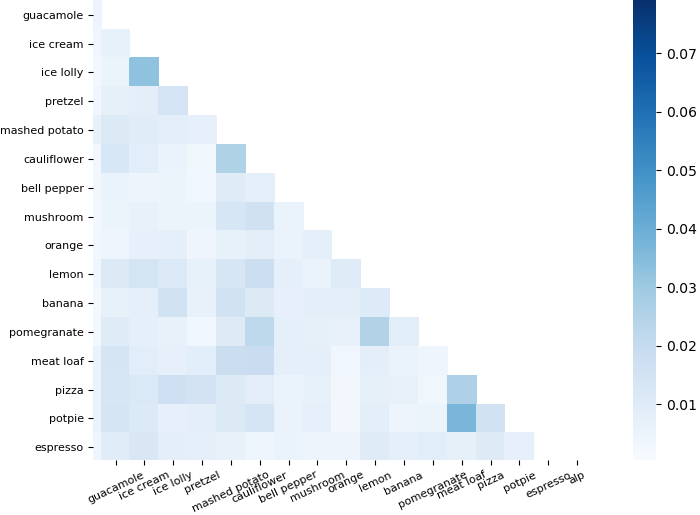}}
\subfigure[Correlation Plot 5]{\label{fig:ti_corr_plot_5}\includegraphics[width=90mm]{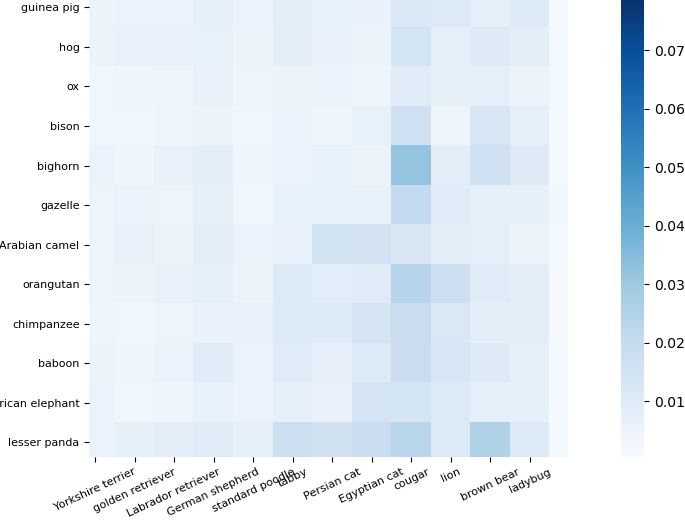}}%
\subfigure[Correlation Plot 6]{\label{fig:ti_corr_plot_6}\includegraphics[width=70mm]{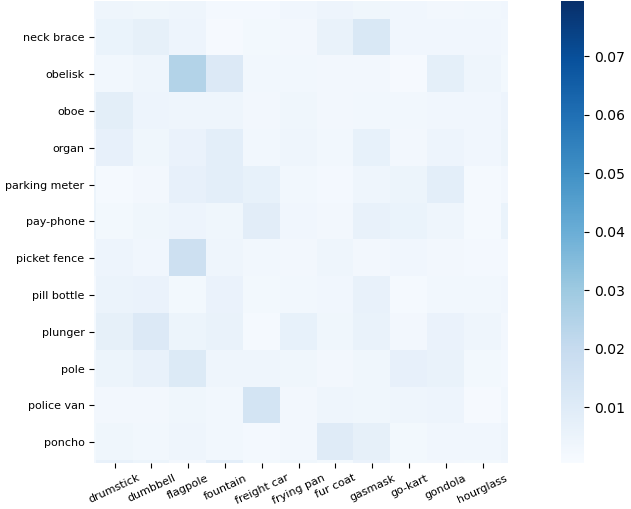}}
\end{figure*}

\begin{figure}[ht]
\centering    
\subfigure[Correlation Plot 1]{\label{fig:corr_plot_1}\includegraphics[width=80mm]{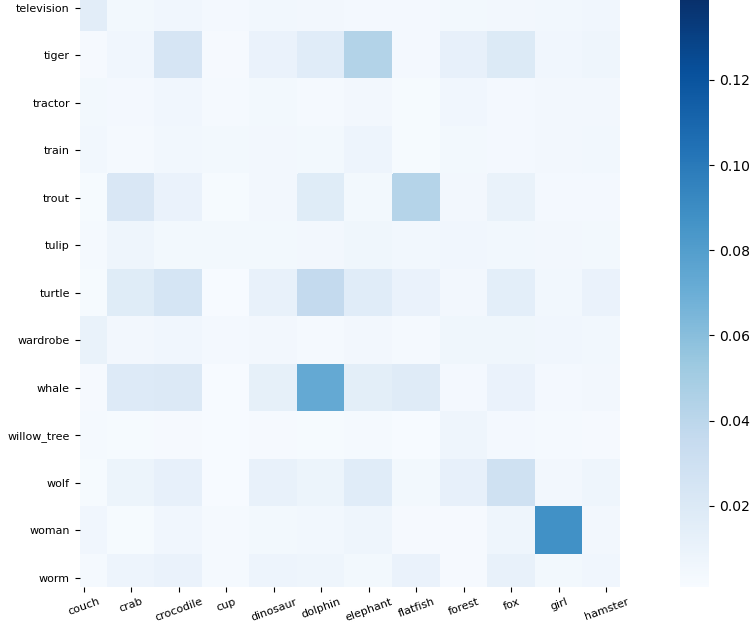}}%
\subfigure[Correlation Plot 2]{\label{fig:corr_plot_2}\includegraphics[width=85mm]{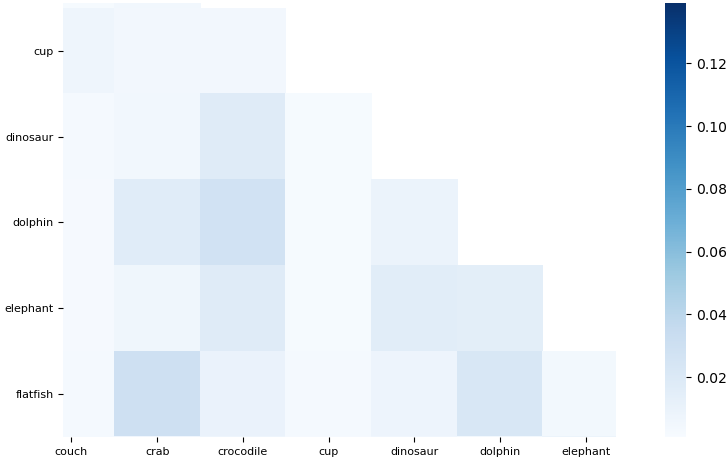}}
\subfigure[Correlation Plot 3]{\label{fig:corr_plot_3}\includegraphics[width=80mm]{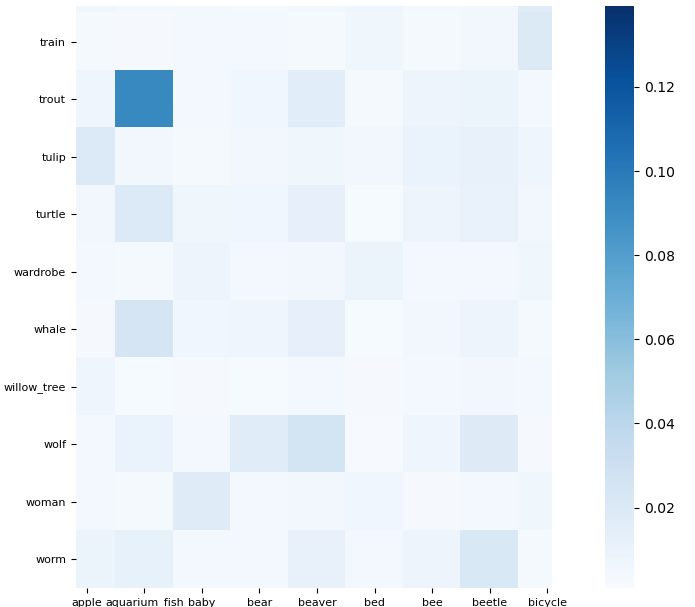}}%
\subfigure[Correlation Plot 4]{\label{fig:corr_plot_4}\includegraphics[width=75mm]{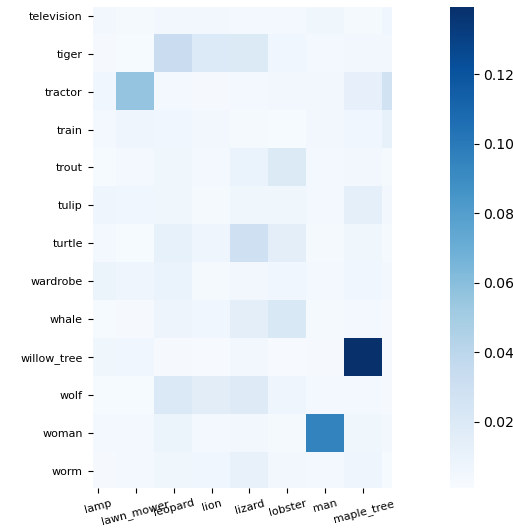}}
\subfigure[Correlation Plot 4]{\label{fig:corr_plot_5}\includegraphics[width=180mm]{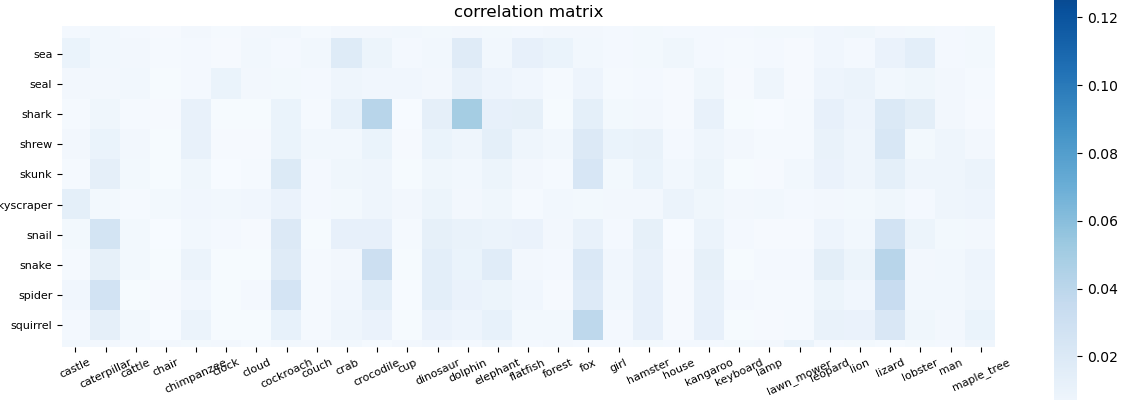}}
\end{figure}

\end{document}